\def\mf{\mathbf}
\def\mb{\mathbb}
\def\mc{\mathcal}
\def\beq{\begin{equation*}}
\def\eeq{\end{equation*}}
\def\bql{\begin{equation}}
\def\eql{\end{equation}}
\def\bqn{\begin{eqnarray*}}
\def\eqn{\end{eqnarray*}}
\def\bnl{\begin{eqnarray}}
\def\enl{\end{eqnarray}}
\def\bna{\begin{array}{rcl}}
\def\ena{\end{array}}
\def\bnn{\begin{equation}\begin{array}{rcl}}
\def\enn{\end{array}\end{equation}}
\def\bma{\begin{bmatrix}}
\def\ema{\end{bmatrix}}
\def\bmx{\begin{matrix}}
\def\emx{\end{matrix}}
\def\ben{\begin{enumerate}}
\def\een{\end{enumerate}}
\def\bit{\begin{itemize}}
\def\eit{\end{itemize}}
\def\bei{\begin{itemize}}
\def\eei{\end{itemize}}
\def\bet{\begin{tabular}}
\def\eet{\end{tabular}}
\def\sp{\text{span}}
\newcommand{\allcaps}[1]{\uppercase\expandafter{#1}}
\providecommand{\abs}[1]{\lvert#1\rvert}
\theoremstyle{thmstyleone}%
\newtheorem{theorem}{Theorem}
\theoremstyle{thmstyletwo}%
\newtheorem{remark}{Remark}%
\theoremstyle{thmstylethree}%
\newtheorem{corollary}{Corollary}%
\title{Temporally Consistent Koopman Autoencoders for Forecasting Dynamical Systems}
\author[1,2,+,a]{Indranil Nayak}
\author[2,+]{Ananda Chakrabarti}
\author[1,3]{Mrinal Kumar}
\author[1,2]{Fernando L. Teixeira}
\author[2,3,*]{Debdipta Goswami}
\affil[1]{ElectroScience Laboratory, The Ohio State University, Columbus, OH, 43212, USA}
\affil[2]{Department of Electrical and Computer Engineering, The Ohio State University, Columbus, OH, 43210, USA}
\affil[3]{Department of Mechanical and Aerospace Engineering, The Ohio State University, Columbus, OH, 43210, USA}
\affil[*]{Corresponding author, email: \texttt{goswami.78@osu.edu}}
\affil[+]{These authors contributed equally to this work.}
\affil[a]{Current affiliation: SLAC National Accelerator Laboratory, Stanford University, Menlo Park, CA 94025, USA.}
\keywords{Koopman, Machine learning, neural networks}
\begin{abstract}
Absence of sufficiently high-quality data often poses a key challenge in data-driven modeling of high-dimensional spatio-temporal dynamical systems. Koopman Autoencoders (KAEs) harness the expressivity of deep neural networks (DNNs), the dimension reduction capabilities of autoencoders, and the spectral properties of the Koopman operator to learn a reduced-order feature space with simpler, linear dynamics. However, the effectiveness of KAEs is hindered by limited and noisy training datasets, leading to poor generalizability. To address this, we introduce the temporally consistent Koopman autoencoder (tcKAE), designed to generate accurate long-term predictions even with limited and noisy training data. This is achieved through a consistency regularization term that enforces prediction coherence across different time steps, thus enhancing the robustness and generalizability of tcKAE over existing models. We provide analytical justification for this approach based on Koopman spectral theory and empirically demonstrate tcKAE's superior performance over state-of-the-art KAE models across a variety of test cases, including simple pendulum oscillations, kinetic plasma, and fluid flow data.
\end{abstract}
\begin{document}

\flushbottom
\maketitle
%
%
\thispagestyle{empty}


\section*{Introduction}

Long-term forecasting of sequential time-series data generated from a nonlinear dynamical system is a central problem in engineering. For classification or prediction of such time-series data, a precise analysis of the underlying dynamical system provides valuable insights to generate appropriate features and interpretability to any computational algorithm. This gives rise to a family of physics-constrained learning (PCL) algorithms that incorporate constraints arising from physical consistency of the governing dynamical system.

Recently, a physics constrained approach for time-series learning based on Koopman methods \citep{Rowley2009} has been introduced. This approach uses an infinite-dimensional linear operator that encodes a nonlinear dynamics completely. The Koopman operator \citep{Koopman1931} provides a computationally preferable linear method to analyze and forecast dynamical systems. However, Koopman operator maps between infinite-dimensional function spaces, and hence, cannot be \emph{in general} computationally represented without a finite-dimensional projection. Machine learning techniques utilize a finite-dimensional approximation of the Koopman operator by assuming the existence of a finite-dimensional Koopman invariant function space. This is usually achieved by an autoencoder network that maps the state-space into a latent space where the dynamics can be linearly approximated by the finite-dimensional Koopman operator encoded via a linear layer. This Koopman autoencoder (KAE) approach is attractive as it strikes a balance between the expressivity of deep neural networks and interpretability of PCL. Also, Koopman modal approximation can be readily used for stability analysis \citep{Mauroy2016} and control \citep{Otto2020, Goswami2021} of the underlying dynamical system in a data-driven fashion. Recent literature \citep{azencot2020forecasting} also investigates the existence of a backward Koopman operator in order to impose an extra consistency constraint on the latent space linear map, giving rise to consistent Koopman autoencoder (cKAE) algorithm. 

This manuscript proposes a new algorithm for consistent long-term prediction of time-series data generated from a nonlinear (possibly high-dimensional) dynamical system using a temporal consistency constraint with Koopman autoencoder framework. This algorithm, dubbed as temporally-consistent Koopman autoencoder (tcKAE), compares the predictions from different initial time-instances to a final time in the \emph{latent space}. This is in contrast with prior KAE methodologies, where the enforcement of multi-step look-ahead prediction loss is reliant on \emph{labeled} data and/or the latent space representation of the labeled data \citep{azencot2020forecasting,lusch2018deep,takeishi2017learning}. Typically larger look-ahead step during training results in greater prediction accuracy, at the cost of training time. However, having large look-ahead step is a constraint when such data is scarce. The proposed tcKAE is different since it evaluates predictions relative to each other, bypassing direct comparison with the encoded version of the \emph{labeled} data. This alleviates limitations on the maximum look-ahead step imposed by a limited training dataset. However, as discussed in Methods section, selecting arbitrarily large look-ahead steps to enforce consistency among predictions is neither practical nor optimal. Instead, a relatively small number of look-ahead steps can yield comparable or even superior results compared to state-of-the-art Koopman neural network models. The key advantage of the tcKAE algorithm over cKAE \citep{azencot2020forecasting} lies in its ability to operate without assuming the existence of backward dynamics. It is analytically shown that a KAE spanning a Koopman invariant subspace must satisfy the temporal consistency constraint and by enforcing it, tcKAE leads to higher expressivity and generalizability. This enforcement of temporal consistency in effect reduces the sensitivity of the KAE to noise, decreases the variance or uncertainty in future predictions, and make it more robust with less training data as shown in the experiments. Consistency regularization have been used for semi-supervised learning in classification problems \citep{Sajjadi2016, Englesson2021, Fan2022}, but to the best of our knowledge, this is the first attempt to regularize multi-step temporal consistency for dynamical system learning.

\section*{Background}
\textbf{Recurrent neural networks:} The idea of forecasting dynamical systems using neural networks, particularly using recurrent neural networks (RNNs), dates back to several decades~\citep{coryn1998rcurrent,aussem1999dynamical}. Recent advances in big data, machine learning algorithms and computational hardware, have rekindled the interest in this domain. Emergence of innovative architectures such as Long Short-Term Memmory (LSTM)~\citep{hochreiter1997long}, or gated neural networks (GRUs)~\citep{chung2014empirical} have enhanced the prediction capabilities of NN models significantly. However, training an RNN is fraught with vanishing and exploding gradient problems \citep{Bengio1994} which can be solved by approaches like analyzing stability \citep{Miller2019} or using unitary hidden weight matrices \citep{Arjovsky2016}. But these can affect the short term modeling capability by reducing the expressivity of the RNNs \citep{Kerg2019}. Another drawback of RNNs is the lack of generalizability and interpretability especially when used for predicting a physical system. To make RNN physically consistent, a number of methods are presented to take physical constraints into account. These range from making a physics guided architecture \citep{Jia2019}, relating it to dynamical systems \citep{Sussillo2013} or differential equations \citep{chang2019antisymmetricrnn}. Hamiltonian neural networks \citep{greydanus2019hamiltonian} aims to learn the conserved Hamiltonian as a physical constraint but works only for lossless systems. In this paper, we use the linear operator-based recurrent archiecture along with a physical prediction constraint to develop a more generalizable prediction model. 

\noindent
\textbf{Koopman-based methods:} The Koopman operator-based approach exploits the linearity \citep{Rowley2009} in an infinite dimensional function space to yield a linearly recurrent prediction scheme. A number of non-neural dictionary-based approach relying on the dynamic mode decomposition (DMD) \citep{schmid2010} are proposed \citep{Williams2015, Williams2016, Otto2020, Goswami2017, Goswami2021}. However, these methods implicitly assumes a dictionary spanning a Koopman invariant subspace. Neural approaches using Koopman autoencoder (KAE) provide a better alternative to define a Koopman invariant function space where the dynamics can be linearly approximated \citep{takeishi2017learning, otto2019linearly, lusch2018deep}. \citep{lange2021fourier} showcased the superiority of Koopman-based spectral methods over LSTM, Gated Recurrent Unit (GRU), and Echo State Networks (ESNs) due its ability to extract ``slow" frequencies which can have significant effect for long-term predictions. Furthermore, the autoencoder architecture is particularly well-suited (order reduction) for modeling high-dimensional physical systems, the primary goal of this work. However, most of the KAE models just focus on multi-step prediction errors and without any test on consistency of predictions. Recent work \citep{azencot2020forecasting} on backward Koopman operator provides a consistency test by making the predictions forward and backward consistent, but works only when a backward dynamics is well-defined.

\noindent
\textbf{Koopman theory: an overview:}
Consider a discrete-time dynamical system on a $N_d$-dimensional compact manifold $\mathcal{M}$, evolving according to the flow-map $\mf{f}:\mc{M}\mapsto \mc{M}$: 
\begin{equation} \label{Eq: Dynamics}
    \mf{x}_{n+1} = \mf{f}(\mf{x}_{n}),\quad \mf{x}_n\in\mc{M},\quad n\in\mb{N}\cup \{0\}.
\end{equation}
Let $\mc{F}$ be a Banach space of complex-valued observables $\psi:\mc{M}\rightarrow \mb{C}$. The discrete-time \emph{Koopman operator} $\mc{K}:\mc{F}\rightarrow \mc{F}$ is defined as
\begin{equation}
    \mc{K}\psi(\cdot) = \psi \circ \mf{f}(\cdot),\quad \text{with}~~\psi(\mf{x}_{n+1})=\mc{K}\psi(\mf{x}_{n})
\end{equation}
where $\mc{K}$ is infinite-dimensional, and linear over its argument. The scalar observables $\psi$ are referred to as the Koopman observables. Koopman eigenfunctions $\phi$ are special set of Koopman observables that satisfy $(\mc{K}\phi)(\cdot)=\lambda \phi(\cdot)$, with an eigenvalue $\lambda\in \mathbb{C}$. Considering the Koopman eigenfunctions span the Koopman observables,  a {vector valued observable $\mf{g}\in\mc{F}^p=[\psi_1~\psi_2~\ldots~\psi_p]^\text{T}$} can be expressed as a sum of Koopman eigenfunctions $\mf{g}(\cdot)=\sum_{i=1}^{\infty}\phi_i(\cdot)\mf{v}^{\mf{g}}_i$, where $\mf{v}^{\mf{g}}_i\in\mb{R}^p, i=1,2,\ldots,$ are called the \emph{Koopman modes} of the observable $\mf{g}(\cdot)$. This modal decomposition provides the growth/decay rate $\abs{\lambda_i}$ and frequency $\angle{\lambda_i}$ of different Koopman modes via its time evolution
\begin{equation}\label{eq:koop_decomp}
    \mf{g}(\mf{x}_t) = \sum_{i=1}^{\infty}\lambda_i^t\phi_i(\mf{x}_0)\mf{v}^{\mf{g}}_i.
\end{equation}
\noindent The Koopman eigenvalues and eigenfunctions are properties of the dynamics only, whereas the Koopman modes depend on the observable. 

Koopman modes can be analyzed to understand the dominant characteristics of a complex dynamical system and getting traction in fluid mechanics \citep{Rowley2009}, plasma dynamics \citep{nayak2021koopman, nayak2023koopman}, control systems \citep{Otto2020}, unmanned aircraft systems \citep{narayanan2023}, and traffic prediction \citep{Avila2020}. In addition, it is also being used for machine learning tasks and training deep neural networks \citep{Dogra2020}. Several methods have also been developed to compute the Koopman modal decomposition, e.g., DMD and EDMD \citep{schmid2010, Williams2015}, Ulam-Galerkin methods, and Koopman autoencoder (KAE) deep neural networks \citep{otto2019linearly, Yeung2019}. In this paper, we primarily focus on long-term prediction of high-dimensional autonomous dynamical systems using Koopman modes with autoencoder networks.

Temporally-consistent Koopman Autoencoders (tcKAE) is a KAE neural network architecture developed for long-term prediction of high-dimensional nonlinear dynamical systems, primarily aimed for data-scarce scenarios. It enforces the temporal-consistency among predictions across different time-steps. This consistency requirement enhances the robustness of the resultant algorithm against noise, reduces variance, improves the prediction accuracy, and reduce the amount of training data required for similar level of prediction accuracy.

\section*{Methods}

\subsection*{Prediction via Koopman invariant subspace: Koopman autoencoders (KAE)} 
Koopman operator, being an infinite-dimensional one, must be projected onto a finite-dimensional basis for any practical prediction algorithm. The equation in \eqref{eq:koop_decomp} can be truncated to finite set of terms in the presence of a finite-dimensional Koopman invariant subspace. However, discovering such finite-dimensional Koopman invariant subspaces purely from data is a challenging task, and an active area of research. Recent works \citep{azencot2020forecasting, takeishi2017learning,lusch2018deep,otto2019linearly} leverage the nonlinear function approximation capabilities of neural networks to find a suitable transformation from the state space $\mc{M}$ to a Koopman invariant subspace via a latent state $(\mf{z})$. These models are encompassed within the broader umbrella of Koopman autoencoders (KAE) which strives to find a suitable transformation from state space to the Koopman observable space where the dynamics is linear, and can be easily learned. The basic operation of KAE can be decomposed into three components \eqref{eq:KAE_operation}, \textit{i)} encoding $(\Psi_e(\cdot)):\mc{M} \rightarrow \mb{R}^{N_l}$ that transforms the original state $\mf{x}$ to a Koopman observable $\mathbf{z}\in\mb{R}^{N_l}$, \textit{ii)} advancing the dynamics in that transformed space through a linear operator $(K\in\mb{R}^{N_l\times N_l})$, and \textit{iii)} decoding $({\Psi}_d(\cdot))$ back to the original state space:
\begin{equation}\label{eq:KAE_operation}
    \mathbf{z}_n\approx \Psi_e(\mathbf{x}_n)\quad\rightarrow \quad \mathbf{z}_{n+k}=K^k\cdot \mathbf{z}_n \quad \rightarrow \quad \mf{x}_{n+k} \approx \Psi_d(\mf{z}_{n+k})=\hat{\mf{x}}_{n+k},   
\end{equation}

\begin{figure} [htbp]
    \centering
      \includegraphics[width=0.7\linewidth]{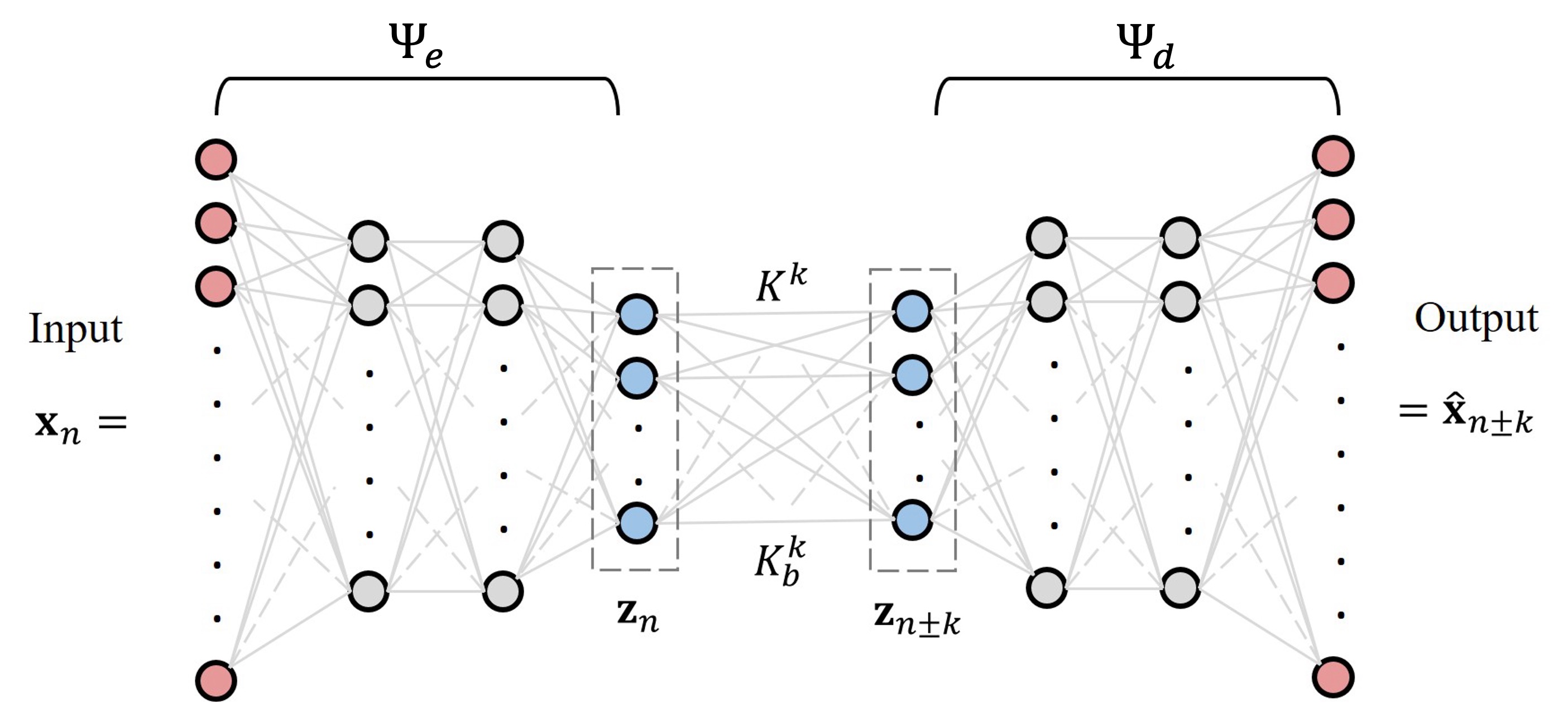}
  \caption{\small{cKAE architecture where the layers with red nodes indicate the input and output layer in the original state space. The latent space is represented by the bottleneck layer with blue nodes. The hidden layers are shown by the gray nodes. The encoder and decoder layers are denoted by $\Psi_e$ and $\Psi_d$ respectively. The linear layers represent the linear forward $(K)$ and backward dynamics $(K_b)$.}}
   \label{fig:KAE_arch}
\end{figure}
where $K$ is the finite-dimensional restriction of $\mc{K}$, which advances $\mathbf{z}$ by $k$ time-steps (or time samples), and $\hat{\mf{x}}$ denotes the KAE reconstruction of the state. Note that for traditional autoencoders the encoding operation typically result in dimensionality reduction which is beneficial when dealing with high-dimensional dynamical systems. Traditionally, since we are mostly interested in future prediction of the state, KAE is trained by minimizing the difference between $\hat{\mf{x}}_{n+k}$ and $\mf{x}_{n+k}$, with $k\in\mb{Z}_+\cup \{0\}$. The loss functions traditionally used for training KAEs are the identity loss $\mc{L}_\text{id}$ ($k=0$), and the forward loss $L_\text{fwd}$ ($k=1,2,\ldots$) \citep{azencot2020forecasting}.   
\begin{align}
    \mc{L}_{id}=\frac{1}{2M}\sum_{n=1}^M||\hat{\mathbf{x}}_n-\mathbf{x}_n||_2, \quad   \mc{L}_{fwd}=\frac{1}{2k_mM}\sum_{k=1}^{k_m}\sum_{n=1}^M||\hat{\mathbf{x}}_{n+k}-\mathbf{x}_{n+k}||_2^2,
\end{align}\label{eq:KAE_loss}
where $||\cdot||_2$ is the 2-norm, $M$ is the number of samples over which we want to enforce the loss, and $k_m$ is the maximum value of $k$, i.e. the maximum look-ahead step for multi-step training. Recent work in \citep{azencot2020forecasting} demonstrated that including backward dynamics $(K_b\in\mb{R}^{N_l\times N_l},~\Psi_d\circ K_b^k \circ \Psi_e (\mf{x}_{n})=\hat{\mf{x}}_{n-k})$ as shown in Figure \ref{fig:KAE_arch} leads to better stability resulting in more accurate long-term predictions. The idea is to incorporate a backward loss term $(\mc{L}_\text{bwd})$ with forward-backward consistency loss $(\mc{L}_\text{con})$,
\begin{align}
      \mc{L}_{bwd}&=\frac{1}{2k_mM}\sum_{k=1}^{k_m}\sum_{n=1}^l||\hat{\mathbf{x}}_{n-k}-\mathbf{x}_{n-k}||_2^2\\
    \mc{L}_{con}&=\sum_{i=1}^{N_b}\frac{1}{2i}||K_{bi\star}K_{\star i}-I_{N_l}||_F+\frac{1}{2i}||K_{\star i}K_{bi\star }-I_{N_l}||_F,     
\end{align}\label{eq:cKAE_loss}
where $N_l$ is the latent space dimension, $K_{bi\star}$ is the upper $i$ rows of $K_b$, $K_{\star i}$ is the $i$ left most columns of $K$, $||\cdot||_F$ denotes the Frobenius norm, and $I_{N_l}\in \mb{R}^{N_l\times N_l}$ is the identity matrix of dimension $N_l\times N_l$.

\subsection*{Temporally-consistent Koopman autoencoder (tcKAE)}

In this section, we describe our proposed temporally-consistent Koopman autoencoder (tcKAE) architecture. The key idea, as illustrated in Figure \ref{fig:tcKAE},  is to introduce a consistency regularization term for training KAE in order to enforce consistency among future predictions. The fundamental idea derives from the time-invariance property of the autonomous dynamical systems. In the context of \eqref{Eq: Dynamics}, the temporal consistency can be stated as the direct result of the following:
\begin{align}
    \mf{f}^{k}(\mf{x}_{n})=\mf{f}^{k + \kappa}(\mf{x}_{n-\kappa}),\quad \forall n, k \in \mb{Z}_+\cup \{0\},~~\kappa=1,2,\ldots, n,
\end{align}
where $\mf{f}^{k}=\mf{f}\circ \mf{f}\circ \ldots (k ~ \text{times})\ldots \circ \mf{f}$, with $\circ$ being the composition operator. One key aspect of our approach is that instead of enforcing this consistency in the original state space, we do so in the Koopman invariant subspace (latent space). This helps to avoid computation in high-dimensional space and provides robustness to noise. We justify enforcing such consistency in the latent space by summarizing the desired property of a consistent autoencoder $\Psi_e$ and constructing the architecture of this framework. The autoencoder $\Psi_e(\cdot) = [\psi_1(\cdot),\ldots,\psi_{N_l}(\cdot)]$ defines a vector-valued observable of the state space $\mc{M}$ where each $\psi_i$ is a scalar-valued function. Let $\mc{G}$ be the span of $\{\psi_1,\ldots,\psi_{N_l}\}$. In order for the linear recurrent matrix $K$ to asymptotically approximate the projection of the Koopman operator $\mc{K}$ on $\mc{G}$, the latter must be Koopman invariant, i.e., $\mc{K}\mc{G}\subsetneq \mc{G}$. The Koopman invariance will ensure the linear recurrent structure  
\bql
\hat{\mf{x}}_{n+1} = \Psi_d \circ K \circ \Psi_e (\mf{x}_n).
\eql The proposed architecture introduces a multi-step temporal consistency loss (not applied against the directly encoded version of the labeled data) in the latent space in order to strengthen the Koopman invariance of the latent function space $\mc{G}$. It is important to emphasize that, for calculating temporal consistency, comparisons are made between multiple forward predictions in the latent space. These predictions are not necessarily constrained by the encoded versions of the available training data. In the following, we summarize the desirable temporal consistency of the encoder functions upon which the proposed method is constructed.

\begin{figure} [htbp]
    \centering
      \includegraphics[width=0.95\linewidth]{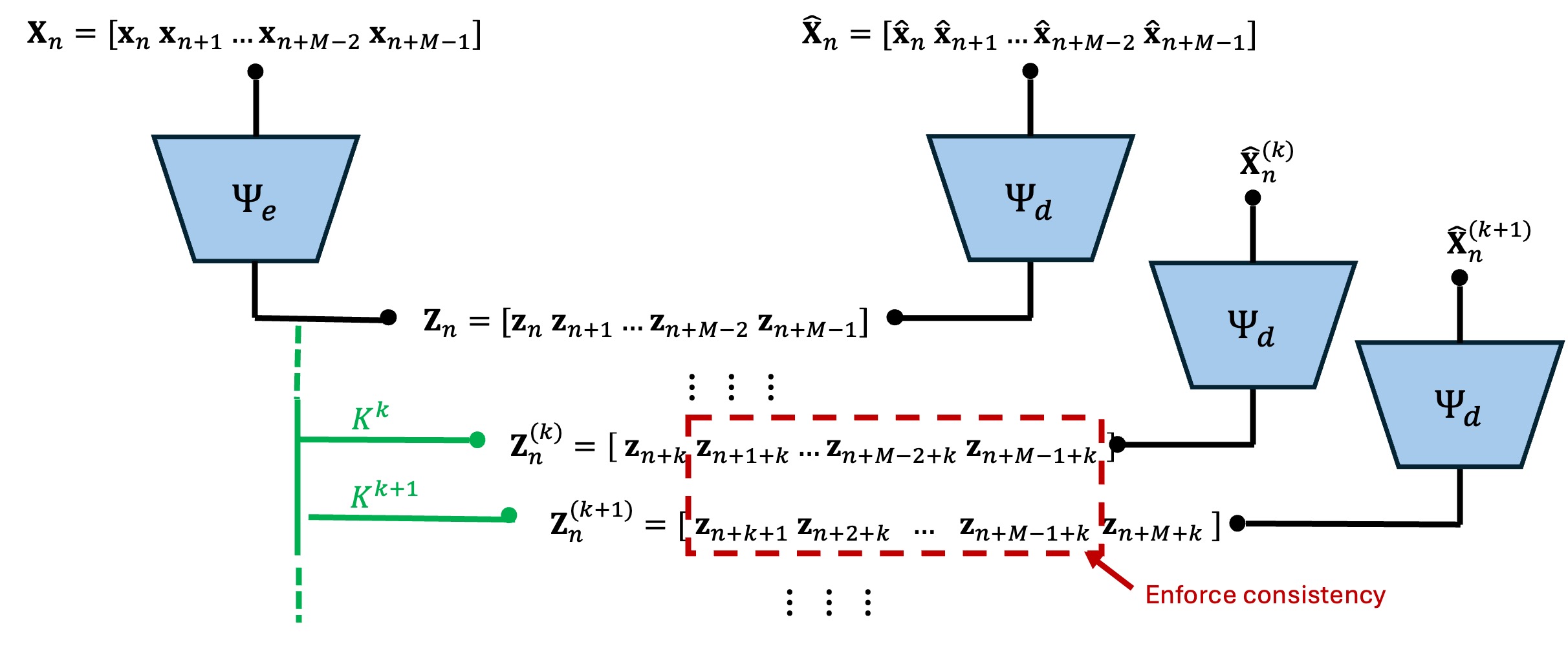}
  \caption{\small{Illustration of enforcement of temporal consistency among predictions in Koopman invariant latent space. }
   \label{fig:tcKAE}}
\end{figure}

\begin{theorem}\label{Thm: pred_consistency}
   Let $\Psi_e(\cdot) = [\psi_1(\cdot),\ldots,\psi_{N_l}(\cdot)]^T \in \mc{F}^{N_l}$ denote a vector valued function (encoder) comprised of scalar functions $\psi_i(\cdot)\in \mc{F}$. The latent space $\mc{G} = \sp\{\psi_1(\cdot)\ldots,\psi_{N_l}(\cdot)\}$ forms a Koopman invariant subspace with respect to the system dynamics \eqref{Eq: Dynamics} if and only if 
there exists $K\in \mb{R}^{N_l\times N_l}$ such that 
\bql \label{Eq: pred_consistency} \Psi_e(\mf{x}_{n+k}) = K^{k}\Psi_e(\mf{x}_{n}) \eql for all $n\geq 0$ and $k\geq 1$.
\end{theorem}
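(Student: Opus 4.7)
The plan is to prove the biconditional by converting between the function-space statement (Koopman invariance of $\mc{G}$) and the pointwise recurrence along trajectories. Both directions hinge on pushing the relation $\mf{\Psi}_e\circ \mf{f} = K\mf{\Psi}_e$ between functions and its evaluation along orbits, with the inductive step in $\kappa$ being essentially mechanical.

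For the $(\Rightarrow)$ direction, assume $\mc{K}\mc{G}\subset \mc{G}$. Since $\mc{K}\psi_i\in \mc{G}$, there exist real coefficients $K_{ij}$ with $\mc{K}\psi_i = \sum_{j=1}^{N_l} K_{ij}\psi_j$ for each $i$. Assembling the $K_{ij}$ into a matrix $K\in\mb{R}^{N_l\times N_l}$ and using $(\mc{K}\psi_i)(\mf{x}) = \psi_i(\mf{f}(\mf{x}))$ yields, in vector form, $\mf{\Psi}_e(\mf{f}(\mf{x})) = K\mf{\Psi}_e(\mf{x})$ for every $\mf{x}\in\mc{M}$. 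Evaluating along the orbit at $\mf{x}=\mf{x}_n$ establishes the $\kappa=1$ case, and an induction on $\kappa$ using $\mf{x}_{n+\kappa} = \mf{f}(\mf{x}_{n+\kappa-1})$ gives $\mf{\Psi}_e(\mf{x}_{n+\kappa}) = K^{\kappa}\mf{\Psi}_e(\mf{x}_n)$.

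For the $(\Leftarrow)$ direction, specialize \eqref{Eq: pred_consistency} to $\kappa=1$. Since the orbit relation is required to hold for arbitrary initial conditions, for any $\mf{x}\in\mc{M}$ we may take $\mf{x}_n = \mf{x}$ and read $\mf{\Psi}_e(\mf{f}(\mf{x})) = K\mf{\Psi}_e(\mf{x})$. Componentwise, this is $\psi_i\circ \mf{f} = \sum_{j=1}^{N_l} K_{ij}\psi_j$, i.e., $\mc{K}\psi_i \in \mc{G}$ for each generator $\psi_i$. Therefore $\mc{K}\mc{G}\subset \mc{G}$, proving Koopman invariance.

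The main subtlety, and the point to state carefully, is the quantifier on $n$ in \eqref{Eq: pred_consistency}: the relation must be read as holding for all admissible initial conditions, so that it yields a pointwise identity on all of $\mc{M}$ rather than merely along a single trajectory. Once this is pinned down, the $(\Rightarrow)$ direction is a direct expansion in the basis of $\mc{G}$, and the $(\Leftarrow)$ direction is an immediate restatement of $\mc{K}\psi_i = \sum_j K_{ij}\psi_j$. No assumption of linear independence of $\{\psi_i\}$ is needed for existence of $K$; if they are linearly dependent, $K$ is simply non-unique, and any valid choice suffices.
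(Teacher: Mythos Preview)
Your proposal is correct and follows essentially the same route as the paper: both directions reduce to the functional identity $\mf{\Psi}_e\circ\mf{f} = K\mf{\Psi}_e$, with the $(\Rightarrow)$ direction obtained by expanding $\mc{K}\psi_i$ in the generators of $\mc{G}$ and then inducting on $\kappa$, and the $(\Leftarrow)$ direction obtained by specializing to $\kappa=1$ with arbitrary initial condition to recover the pointwise identity on $\mc{M}$. Your explicit construction of $K$ via the coefficients $K_{ij}$ and your remark on the quantifier over initial conditions are slightly more detailed than the paper's presentation, but the logical skeleton is identical.
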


\begin{proof}
    $(\Rightarrow)$ Suppose $\Psi_e(\cdot)$ is defined such that $\mc{G}$ is a Koopman-invariant subspace. We prove this part through induction. For any $n\geq 0$, \eqref{Eq: pred_consistency} is satisfied for $k=1$ with a matrix $K$ where $K=\mc{K}_{\mc{G}}$, i.e., the restriction of the Koopman operator $\mc{K}$ on $\mc{G}$. Now suppose that it is satisfied for a fixed $k\in\mb{N}$. Then, $\Psi_e(\mf{x}_{n+k+1}) = \Psi_e\circ \mf{f}(\mf{x}_{n+k}) = \mc{K}\Psi_e(\mf{x}_{n+k}) = \mc{K}K^{k}\Psi_e(\mf{x}_{n}) = K^{k}\mc{K}\Psi_e(\mf{x}_{n}) = K^{k} \Psi_e\circ \mf{f}(\mf{x}_{n}) =  K^{k}\Psi_e(\mf{x}_{n+1}) = K^{k+1}\Psi_e(\mf{x}_{n})$, where the first two steps are by the definition of Koopman operator, next two steps are by the linearity of the same, and the last step is by the induction anchor.

    $(\Leftarrow)$ Suppose $\exists$ $K\in \mb{R}^{N_l\times N_l}$ such that \eqref{Eq: pred_consistency} is satisfied for all $n\geq 0$ and $k\geq 1$.  Then it must be satisfied for $n=0$ and $k=1$, i.e., $\Psi_e(\mf{x}_{1}) = K\Psi_e(\mf{x}_{0})$. Since $\mf{x}_0$ can be anything in $\mc{M}$, this yields $\Psi_e\circ\mf{f}(\mf{x}) = K\Psi_e(\mf{x})$ for all $\mf{x}\in\mc{M}$, i.e., $\Psi_e\circ\mf{f}(\cdot) = K\Psi_e(\cdot)$.  Let $g\in\mc{G}$ be an observable. Then, $\exists$ $\mf{\alpha} = [\alpha_1,\ldots,\alpha_{N_l}]\in \mb{R}^{N_l}$ such that $g(\cdot) = \sum\limits_{i=1}^{N_l}\alpha_i\psi_{N_i}(\cdot)=\mf{\alpha}^\top \Psi_e(\cdot)$. Now, $\mc{K}g(\cdot) = g\circ\mf{f}(\cdot) = \mf{\alpha}^ \top \Psi_e\circ\mf{f}(\cdot) = \mf{\alpha}^\top K\Psi_e(\cdot) = \sum\limits_{j=1}^{N_l}\left(\sum\limits_{i=1}^{N_l} \alpha_i K_{ij}\right) \psi_j(\cdot)$, i.e., $\mc{K}g\in\mc{G}$. Hence, $\mc{G} = \sp\{\psi_1(\cdot)\ldots,\psi_{N_l}(\cdot)\}$ forms a Koopman-invariant subspace.
\end{proof}
Theorem \ref{Thm: pred_consistency} ensures that, in the latent space, encoded state at any time instant must be equal to the encoded state of any previous time multiplied with $K^{k}$ where $k$ denotes the shift in terms of time samples between those two instances. It also forces the latent state to have linear time-invariant dynamics with a constant state-transition matrix $K$. Theorem \ref{Thm: pred_consistency} provides a method to ensure the autoencoder span a Koopman invariant subspace by enforcing temporal consistency as follows.
\begin{corollary}\label{cor:pcloss}
For a KAE $\Psi_e:\mc{M}\rightarrow \mb{R}^{N_l}$, if the associated function space $\mc{G}$ is Koopman invariant, then for any $n\geq 0$ and $k\geq 1$, $K^{k-p}\Psi_e(\mf{x}_{n+p}) = K^{k-q}\Psi_e(\mf{x}_{n+q})$ for all $p,q=1,\ldots,k-1$, $(p\neq q)$ i.e., 
\bqn K^{k-1}\Psi_e(\mf{x}_{n+1}) = K^{k-2}\Psi_e(\mf{x}_{n+2}) =\ldots = K\Psi_e(\mf{x}_{n+k-1}).\eqn 
\end{corollary}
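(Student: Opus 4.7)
The plan is to derive this as an immediate consequence of Theorem \ref{Thm: pred_consistency}, reducing both sides of the asserted equality to a common expression involving only $\mf{\Psi}_e(\mf{x}_n)$. The hypothesis gives Koopman invariance of $\mc{G}$, so Theorem \ref{Thm: pred_consistency} applies and provides a matrix $K\in\mb{R}^{N_l\times N_l}$ with $\mf{\Psi}_e(\mf{x}_{n+m}) = K^m \mf{\Psi}_e(\mf{x}_n)$ for all $n\geq 0$ and integer $m\geq 1$ (and trivially for $m=0$).

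First I would apply this identity with $m=p$ to obtain $\mf{\Psi}_e(\mf{x}_{n+p}) = K^p \mf{\Psi}_e(\mf{x}_n)$, and then premultiply by $K^{\kappa-p}$ to get
\beq
K^{\kappa-p}\mf{\Psi}_e(\mf{x}_{n+p}) = K^{\kappa-p}K^{p}\mf{\Psi}_e(\mf{x}_n) = K^{\kappa}\mf{\Psi}_e(\mf{x}_n),
\eeq
using that powers of the same matrix commute and add in exponents. Next I would repeat the same computation with $q$ in place of $p$ to obtain $K^{\kappa-q}\mf{\Psi}_e(\mf{x}_{n+q}) = K^{\kappa}\mf{\Psi}_e(\mf{x}_n)$. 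Equating the two right-hand sides yields the claimed identity $K^{\kappa-p}\mf{\Psi}_e(\mf{x}_{n+p}) = K^{\kappa-q}\mf{\Psi}_e(\mf{x}_{n+q})$ for all admissible $p,q$.

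There is essentially no obstacle here: the corollary is a bookkeeping consequence of the forward direction of Theorem \ref{Thm: pred_consistency}, and only requires that the exponents $\kappa-p$ and $\kappa-q$ be nonnegative, which is guaranteed by the stated range $p,q\in\{1,\ldots,\kappa-1\}$ (in fact $p,q\leq \kappa$ suffices). The only mild subtlety worth noting in the write-up is that the chained equalities displayed in the corollary statement follow by applying the argument pairwise across consecutive indices, so no additional induction is required beyond what is already contained in Theorem \ref{Thm: pred_consistency}.
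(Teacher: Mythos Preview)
Your proposal is correct and mirrors the paper's intent: the corollary is stated without a separate proof precisely because it follows immediately from Theorem~\ref{Thm: pred_consistency} by writing $K^{\kappa-p}\mf{\Psi}_e(\mf{x}_{n+p}) = K^{\kappa-p}K^{p}\mf{\Psi}_e(\mf{x}_n) = K^{\kappa}\mf{\Psi}_e(\mf{x}_n)$ and likewise for $q$. Your observation that the argument in fact works for $p,q\leq \kappa$ (not just $\kappa-1$) is a fair remark.
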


Corollary \ref{cor:pcloss} is essentially heart of the temporal consistency loss. In order to define our loss function, let us consider the reference time-step to be $n$. For a batch size of $M$ (in latent space $\mf{Z}_n=[\mf{z}_n~\mf{z}_{n+1}\ldots~\mf{z}_{n+M-1}]$), let the KAE advance the dynamics up to $k_{tm}$ time-steps, giving rise to a set of batches $\mf{Z}^{(k)}_n=[\mf{z}_{n+k}^{(k)}~\mf{z}_{n+1+k}^{(k)}\ldots~\mf{z}_{n+M+k-1}^{(k)}]$ with $k = 1,2,\ldots,k_{tm}$, where the superscript $(k)$ denotes the number of operations of $K$. The consistency loss $\mc{L}_\text{tc}$ (Figure \ref{fig:tcKAE}) can be defined as,
\bql\label{Eq: tc_loss}
    \mc{L}_\text{tc}=\frac{1}{2(k_{tm}-1)}\sum_{q=1}^{k_{tm}-1}\mc{L}_q, \quad   \mc{L}_{q}=\frac{1}{(k_{tm}-q)}\sum_{k=1}^{k_{tm}-q}\frac{1}{(M-q)}\sum_{p=q}^{M-1}||\mf{z}^{(k)}_{n+k+p}-\mf{z}^{(k + q)}_{n+k+p}||_2^2,
\eql
where $\mc{L}_q$ takes into account the common terms between two batches shifted by $q$ time samples. The total loss $(\mc{L}_\text{tot})$ for training tcKAE is given by,
\begin{align}\label{eq:total_loss}   \mc{L}_\text{tot}=\gamma_\text{id}\mc{L}_\text{id} + \gamma_\text{fwd} \mc{L}_\text{fwd} + \gamma_\text{bwd} \mc{L}_\text{bwd} + \gamma_\text{con}\mc{L}_\text{con} + \gamma_\text{tc}\mc{L}_\text{tc},
\end{align}

where $\gamma_\text{id}, \gamma_\text{fwd}, \gamma_\text{bwd}, \gamma_\text{con}$ and $\gamma_\text{tc}$ are the corresponding scaling factors.

\begin{remark}
    The temporal consistency loss enforces a Koopman invariant subspace by virtue of the time-homogeneity of the latent-state dynamics, i.e., the Koopman operator is time-invariant for an autonomous system. This provides a \emph{physics constrained learning} method for dynamical system prediction even from a smaller dataset.
\end{remark}

\subsection*{Simulation Setup}
In this section we discuss the simulation setup in details for both the oscillating electron beam and flow past cylinder.

\begin{figure} [htbp]

 \subfloat[\label{fig:beam_snap_n40000}]{
      \includegraphics[width=0.28\textwidth]{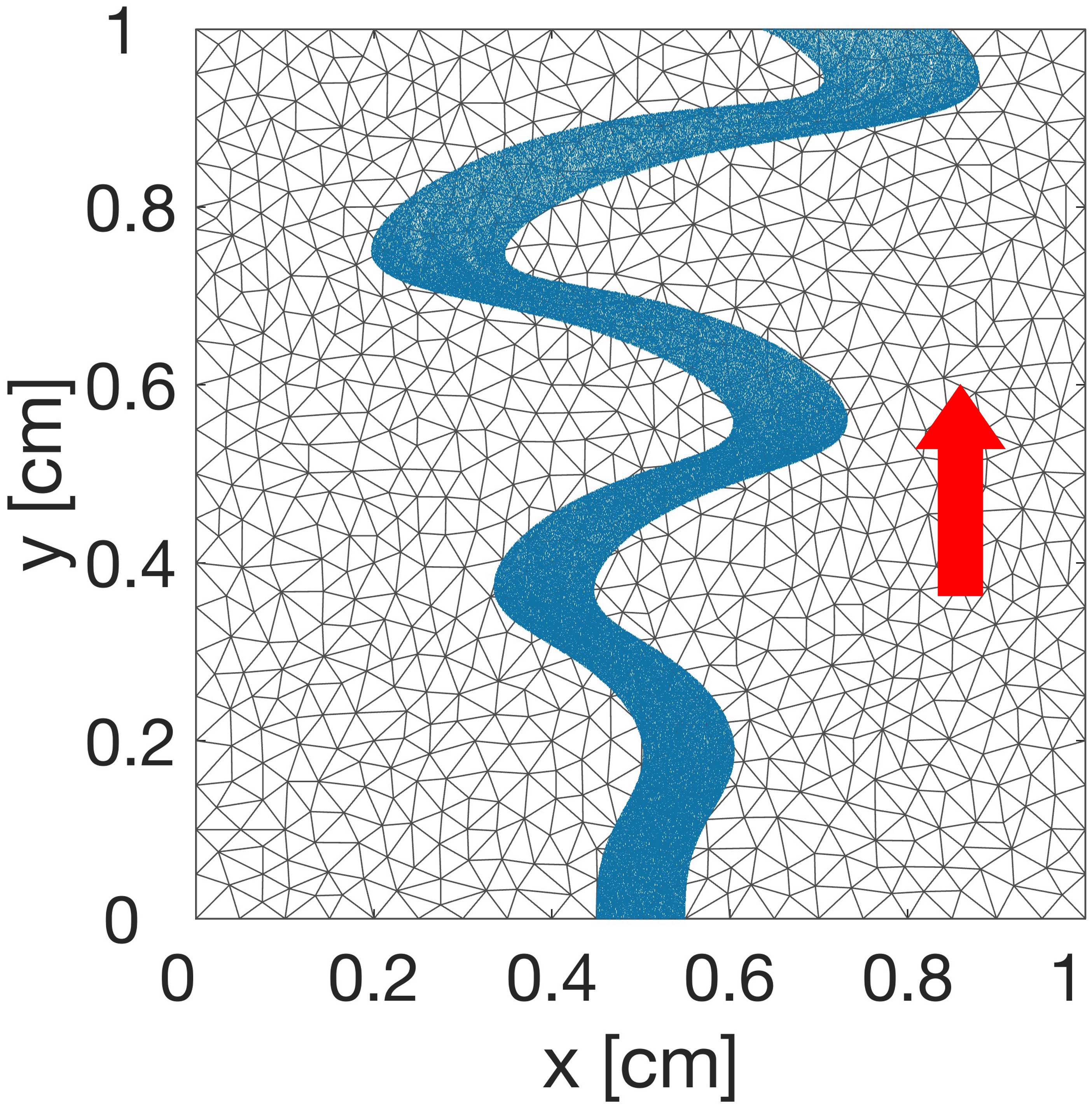}}
 \subfloat[\label{fig:flow_cyl_snap}]{
      \includegraphics[width=0.72\textwidth]{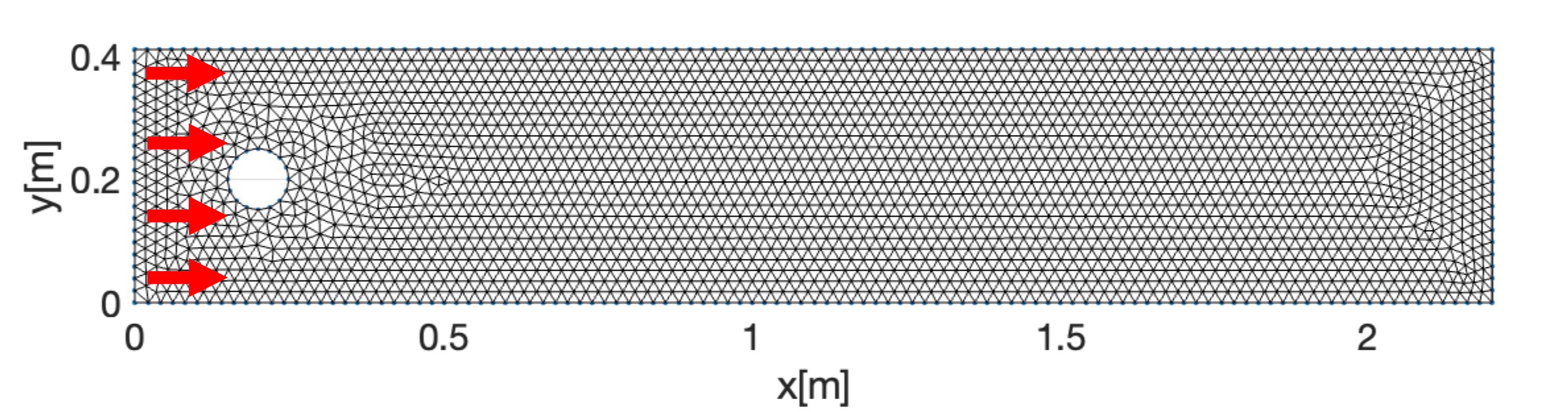}}
  \caption{ Simulation setup for benchmark dynamical systems: (a) Snapshot of the 2D electron beam (blue dots indicates the charged particles) at $t=8$ ns with unstructured triangular mesh. Red arrow indicates the beam direction. ; (b) Irregular triangular mesh for flow-past cylinder finite element simulation. Red arrows indicate the flow direction.}
\end{figure}


\subsubsection*{Oscillating electron beam}
A two-dimensional (2-D) electron beam (Figure \ref{fig:beam_snap_n40000}) is simulated inside a square cavity of dimension $1~\text{cm}\times 1~\text{cm}$ using a charge-conserving electromagnetic particle-in-cell (EMPIC) algorithm \cite{na2016local}. The electron beam is propagating along the $+$ve $y$ direction under the influence of an oscillating transverse magnetic flux. The solution domain is discretized using irregular triangular mesh (grey lines in Figure \ref{fig:beam_snap_n40000}) with $N_0=844$ nodes, $N_1=2447$ edges and $N_2=1604$ elements (triangles). The blue dots in Figure \ref{fig:beam_snap_n40000} represent the superparticles which are discretized representation of the phase-space of electrons (delta distribution in both position and velocity). The superparticles are essentially the point charges with charge $q_{sp}=r_{sp}q_e$, and mass $m_{sp}=r_{sp}m_e$, where $r_{sp}$ is the superparticle ratio with $q_e$ and $m_e$ respectively representing the charge and mass of an electron. We select $r_{sp}=5000$. The superparticles are injected from the bottom of the cavity randomly with uniform distribution over the region $[0.45~\text{cm},~0.55~\text{cm}]$ with the injection rate of $50$ superparticles per time-step. The superparticles are injected along the vertical direction with $v_y=5\times10^6$ m/s. The external oscillating magnetic flux can be represented by $\mathcal{B}_{ext}(t)=\mathcal{B}_0~\text{sin}(2\pi t/T_{osc})~\hat{z}$ with $\mathcal{B}_0=2.5\times10^{-2}$ T, and $T_{osc}=0.8$ ns. The time-step for the EMPIC simulation is taken to be $\Delta t=0.2$ ps.

\subsubsection*{Flow past cylinder}
The 2-D solution domain $(2.2~\text{m} \times 0.41~\text{m}) $ is discretized using irregular triangular mesh (Figure~\ref{fig:flow_cyl_snap}) with number of nodes $N_0=2647$, and number of elements (triangles) $N_2=5016$. The cylinder has the diameter of $0.1$ m with center of the cylinder located at $(0.2~\text{m},0.2~\text{m})$. The flow is assumed to be incompressible, and governed by the Navier-Stokes equations with $\mathbf{u}$, $\mathbf{v}$ denoting the horizontal and vertical component of the velocity respectively while $\mathbf{p}$ denotes the pressure field. The density of the fluid is set to $\rho=1$ Kg/m$^3$, and dynamic viscosity $\mu=0.001$ Kg/m s. The flow is unsteady with a maximum velocity of 1 m/s and mean velocity $\frac{2}{3}$ of the maximum velocity. The simulation starts with initial conditions $\mathbf{u}_0=\mathbf{v}_0=0$ and $\mathbf{p}_0=0$. For the boundary conditions, the leftmost boundary is set as an inlet with a parabolic velocity profile. This is representative of fully developed laminar flow at the inlet. The rightmost boundary is set as an outflow (pressure boundary), where we specify the pressure but do not specify the velocity, allowing the flow to exit naturally based on the internal flow field. All other boundaries are treated as walls with a no-slip condition, which means the fluid velocity at the walls is zero. The simulation runs for a total of 80 seconds, with a time-step size of 0.01 seconds.

\subsection*{Network Architecture \& Training strategy}

The overall KAE network architecture is shown in Figure \ref{fig:KAE_arch} and the tcKAE architecture is shown in Figure \ref{fig:tcKAE}. Note that the network topology is same as that in \cite{azencot2020forecasting}. The number of nodes in the input and output layer (red) is indicated by $N_\text{in}$ and $N_\text{out}$ ($N_\text{in}=N_\text{out}$). $N_l$ denotes the number of nodes in the bottleneck layer (blue), which represents the approximation of Koopman-invariant latent space. Number of nodes in each hidden layers (two hidden layers for each encoder and decoder) is represented by $N_h$ (gray nodes). We keep $N_h$ same for both the hidden layers and it is tuned in multiple of $16$. We use the $tanh$ activation function for each layer except the bottleneck layer.

The experiments were carried out on the Ohio Supercomputer's (OSC)  \cite{ohio_supercomputer} Owens cluster, consisting of 842 Dell Nodes and 648 compute nodes with Dell PowerEdge C6320 two-socket servers, Intel Xeon E5-2680 v4 (Broadwell, 14 cores, 2.40 GHz) processors and 128 GB memory. As shown in Figure \ref{fig:tcKAE_dataset}, we partition the dataset $\mathbf{X}$ into training ($\mathbf{X}_{\text{train}}$), validation ($\mathbf{X}_{\text{val}_1}, \mathbf{X}_{\text{val}_2}$) and testing ($\mathbf{X}_{\text{test}}$) set. For a set of hyperparameters, the tcKAE architecture is trained to reduce the training loss $\mc{L}_\text{tot}$ as defined in \ref{eq:total_loss}. The set of hyperparameters which provides lowest validation loss is chosen to be the optimal set of hyperparameters for each case. The validation loss is computed as the average relative 2-norm error over $\mathbf{X}_{\text{val}_2}$, using predictions initialized from points in $\mathbf{X}_{\text{val}_1}$ as initial conditions. We analyze four different scenarios for each test case with clean and noisy (30 dB SNR) data for two different $N_{\text{train}}$ values. Instead of enforcing $\mc{L}_\text{tc}$ from the very first epoch of training, we enforce it after certain number of epochs $e_s$, which is a hyperparameter for training the tcKAE. That is, we keep $\gamma_{\text{tc}}=0$ for the first $e_s$ number of epochs. Up to $e_s$ we train without enforcing the consistency regularization, and switch to nonzero $\gamma_\text{tc}$ from $e_s$ onward. The look-ahead step $k_{tm}$ for enforcing temporal consistency is an important hyperparameter. Ideally, a larger $k_{tm}$ should lead to better accuracy, since we are looking further into the future. However, as seen in (11), increasing $k_{tm}$ also increases the number of terms in the summation for the temporal consistency loss. Since in practice we are using a numerical stochastic optimization routine, the increasing complexity (and possibly non-convexity) of the loss term increases the possibility of the optimization algorithm to get stuck in a local minima, thereby resulting in higher error and increased training time. As a result, in practice, we typically avoid selecting very large $k_{tm}$ unless there is a significant improvement in accuracy. The weights for different loss components are one of the most crucial hyperparameters. Although, there are no fixed rule for selecting these weights, it is not entirely ad-hoc either. Since we are interested in forward prediction from the trained model, $\gamma_\text{fwd}$ and $\gamma_\text{id}$ are most crucial. In order to reduce the number of variables, we set $\gamma_\text{id}=1$ and vary other weights accordingly. Note that the advantage of tcKAE over DAE and cKAE comes at the cost of extra hyperparameters and increased training time. Training time, detailed training parameters, ablation study, and effect of $k_\text{tm}$ on prediction accuracy is provided in the supplementary material (see Supplementary Tables 1-12). \par

\section*{Results} \label{sec:results}
\subsection*{Baseline}
 The performance of the proposed prediction-consistent Koopman autoencoder (tcKAE) is compared against the state-of-the-art reduced-order KAE forecasting models for high-dimensional physical systems. As discussed earlier, several neural-network based models such as recurrent neural networks (RNNs) \citep{chung2014empirical,kerg2019non,chang2019antisymmetricrnn}, Hamiltonian neural networks (HNNs) \citep{greydanus2019hamiltonian} have already been applied successfully in forecasting dynamical systems. However, these models do not fare well for very long-term predictions \citep{lange2021fourier}, and very high-dimensional systems. Koopman-based autoencoder models \citep{lusch2018deep,otto2019linearly} have been shown to be effective in capturing this long-term behavior, especially for high-dimensional systems. However, recently proposed consistent Koopman autoencoder \citep{azencot2020forecasting} which we will refer to as cKAE, has been shown to outperform all the mentioned methods for long-term predictions in terms of stability. We will consider cKAE as the baseline for benchmarking our proposed tcKAE for several datasets. We will also compare our results with dynamic autoencoders (DAEs), the term used in \citep{azencot2020forecasting} to describe the models in \citep{lusch2018deep,otto2019linearly}. 

\begin{figure} [t]
    \centering
      \includegraphics[width=0.6 \linewidth]{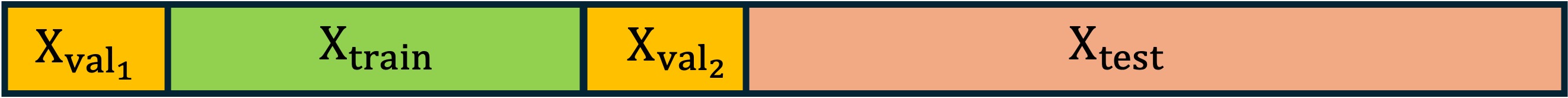}
  \caption{\small{Splitting of time-series dataset for training, validation and testing. }
   \label{fig:tcKAE_dataset}}
\end{figure}

\subsection*{Performance on Benchmark Datasets} \label{sec:dataset} In order to streamline the discussion, we adopt the following notation: the complete dataset is denoted by $\mathbf{X}\in \mathbb{R}^{N_\text{dim}\times N_\text{tot}}$, where $N_\text{dim}$ is the dimension of the state-space, and $N_\text{tot}$ is the total number of time samples~\footnote{Note that we distinguish between time steps and time samples. By time step, we refer to the discrete time instances used by high-fidelity numerical solvers to generate data. This high-fidelity data is typically sampled at specific time steps and used for KAE training.}. We denote the training set as $\mathbf{X}_{\text{train}}\in\mathbb{R}^{N_\text{dim}\times N_\text{train}}$, validation set as $\mathbf{X}_{\text{val}}\in\mathbb{R}^{N_\text{dim}\times N_\text{val}}$, and testing set as $\mathbf{X}_{\text{test}}\in\mathbb{R}^{N_\text{dim}\times N_\text{test}}$ with $N_{\text{train}}$, $N_{\text{val}}$ and $N_{\text{test}}$ being the number of time samples in training, validation and testing set respectively. Note that validation for long-term time-series prediction is nontrivial, and authors in \cite{azencot2020forecasting} does not mention any validation set. However, for practical applications, validation is crucial, and we validate using the following method: We divide the validation set in two sets $\mathbf{X}_{\text{val}_1}\in \mb{R}^{N_\text{dim} \times N_{\text{val}_1}}$ and $\mathbf{X}_{\text{val}_2}\in \mb{R}^{N_\text{dim} \times N_{\text{val}_2}}$, separated by $\mathbf{X}_{\text{train}}$ as shown in Figure \ref{fig:tcKAE_dataset}, where $N_{\text{val}}=N_{\text{val}_1}+N_{\text{val}_2}$. We chose $N_{\text{val}_1}=N_{\text{val}_2}=\frac{1}{4}N_{\text{train}}$ for all the datasets. The datapoints in $\mathbf{X}_{\text{val}_1}$ are used as initial conditions to predict over the span of $\mathbf{X}_{\text{val}_2}$. The error metric we use in this work is the 2-norm relative error defined (at $n^\text{th}$ time sample) by $\delta_n = \frac{||\hat{\mathbf{x}}_n - \mathbf{x}_n||_2}{||\mathbf{x}_n||_2}$, where $\mathbf{x}$ is the original state and $\hat{\mathbf{x}}$ is the predicted state.  The validation loss is determined by calculating the average relative 2-norm error across $\mathbf{X}_{\text{val}_2}$, where predictions are generated using initial conditions from points in $\mathbf{X}_{\text{val}_1}$. The test error is calculated as the 2-norm relative error in the extrapolation region. To generate an error bar, we use the variance quantified by the 90\% confidence interval, defined by the 5\textsuperscript{th} and 95\textsuperscript{th} percentiles of the error. This provides a robust summary of the typical error range by excluding the extreme 5\% values at both ends of the distribution. We follow the statistical analysis approach used in \cite{azencot2020forecasting}. Specifically, we take the first $N_s$ time steps ($N_s = N_\text{train}$) from the testing set and use the corresponding states as initial conditions. The trained model is then used to predict up to a fixed number of time steps. For each initial condition, we compute the 2-norm error by comparing the predictions with the ground truth. These errors are then used to generate the confidence interval. The reported results represent the average of neural network training outcomes across 10 different random seeds.

\begin{figure*} [htbp]

    \centering
  \subfloat[$N_\text{train}=32$, Clean. \label{fig:err_pend_ntr32snr0} ]{%
       \includegraphics[width=0.49\linewidth]{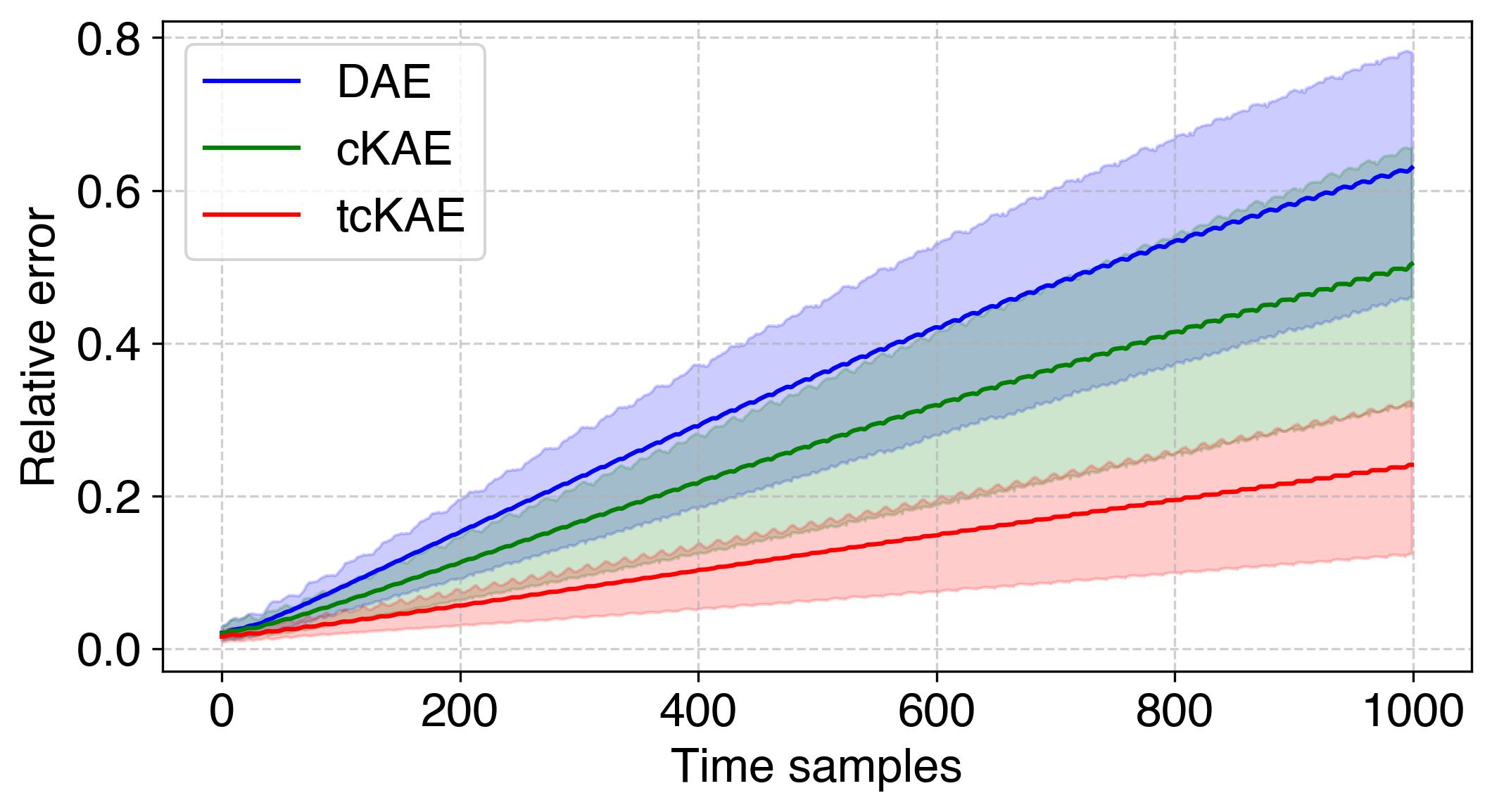}}
    \hfill
  \subfloat[ $N_\text{train}=32$, 30 dB SNR. \label{fig:err_pend_ntr32snr30} ]{%
        \includegraphics[width=0.49\linewidth]{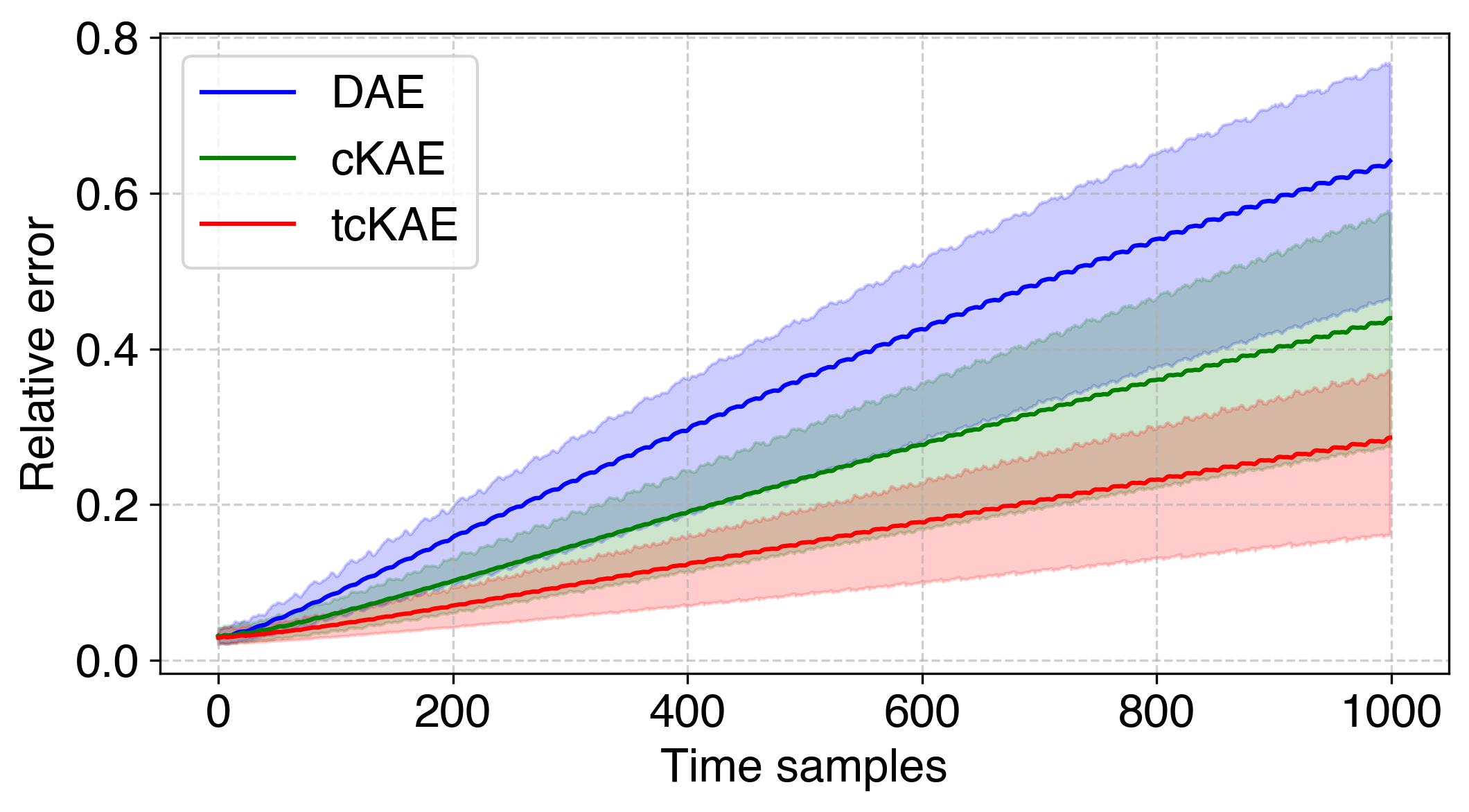}}
    \\ 
   \subfloat[$N_\text{train}=64$, Clean.\label{fig:err_pend_ntr64snr0} ]{%
        \includegraphics[width=0.49\linewidth]{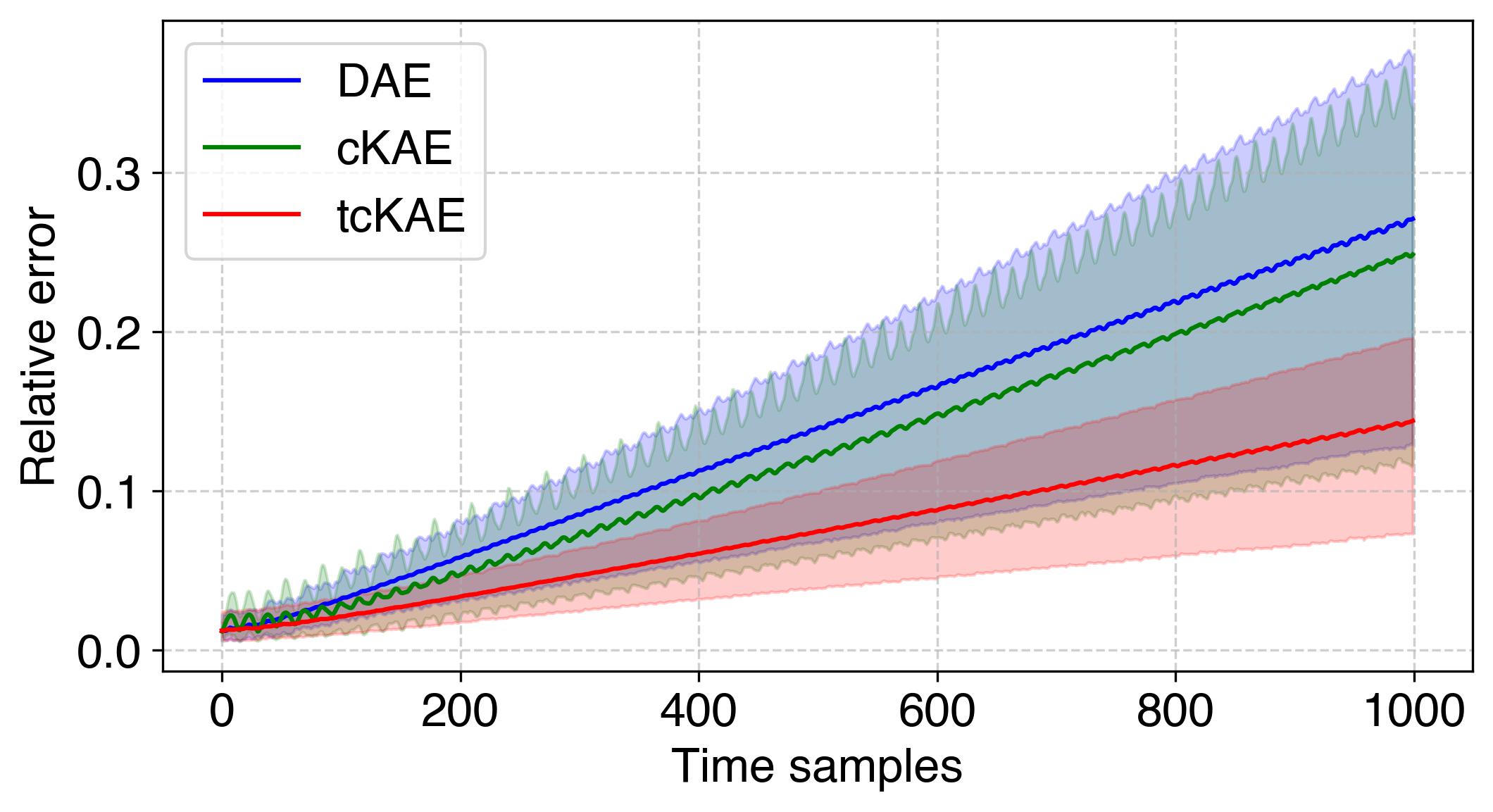}}
    \hfill 
    \subfloat[$N_\text{train}=64$, 30 dB SNR.\label{fig:err_pend_ntr64snr30} ]{%
        \includegraphics[width=0.49\linewidth]{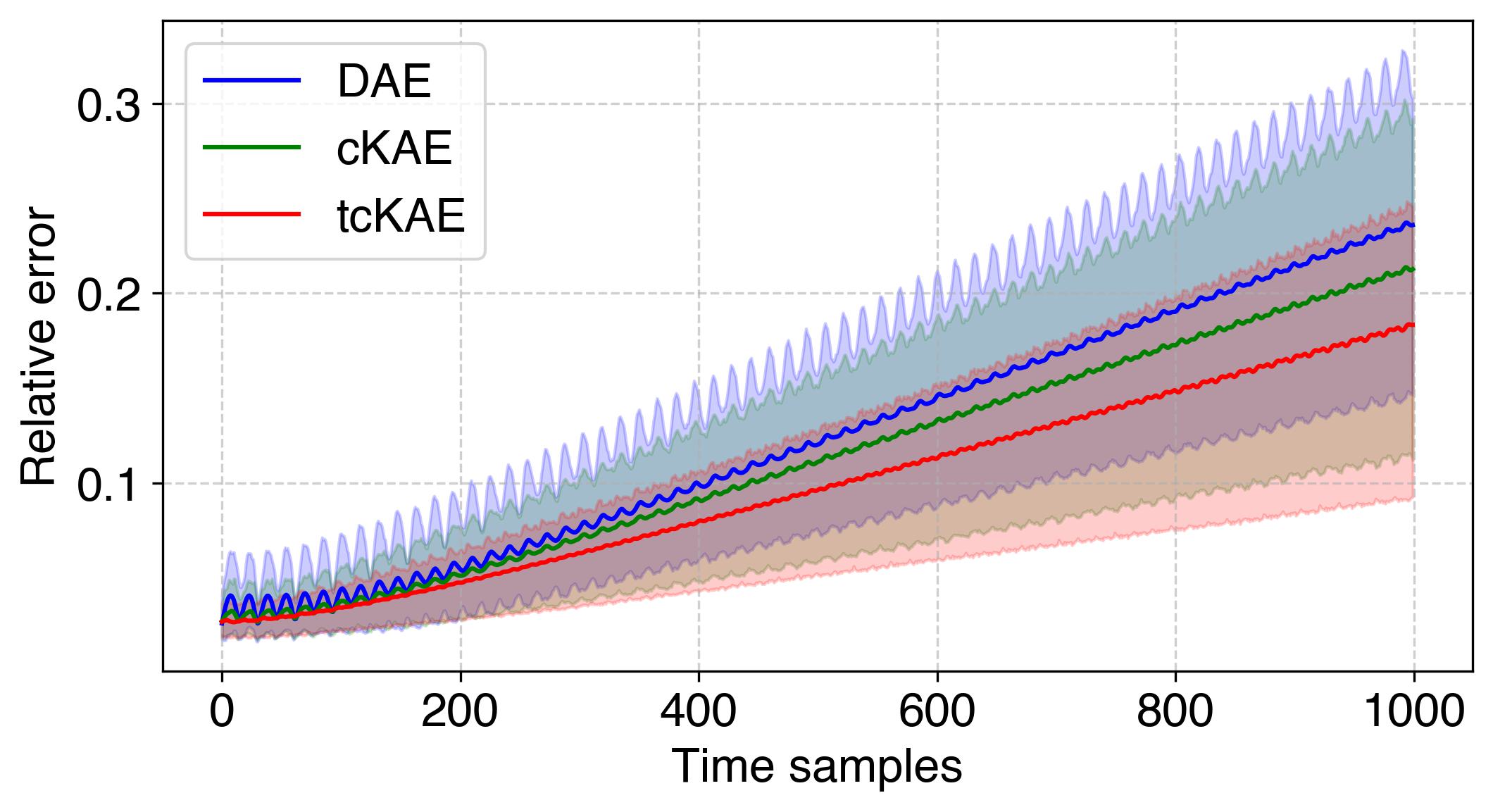}} 
  \caption{\small{Comparison between DAE, cKAE, and tcKAE for long-term extrapolation performance of pendulum dynamics. The shaded portion denotes the 90\% confidence interval.}  }\label{fig:err_pend}
\end{figure*}

\begin{figure*} [t]

    \centering
  \subfloat[$N_\text{train}=28$, Clean. \label{fig:err_wavy_ntr28snr0} ]{%
       \includegraphics[width=0.49\linewidth]{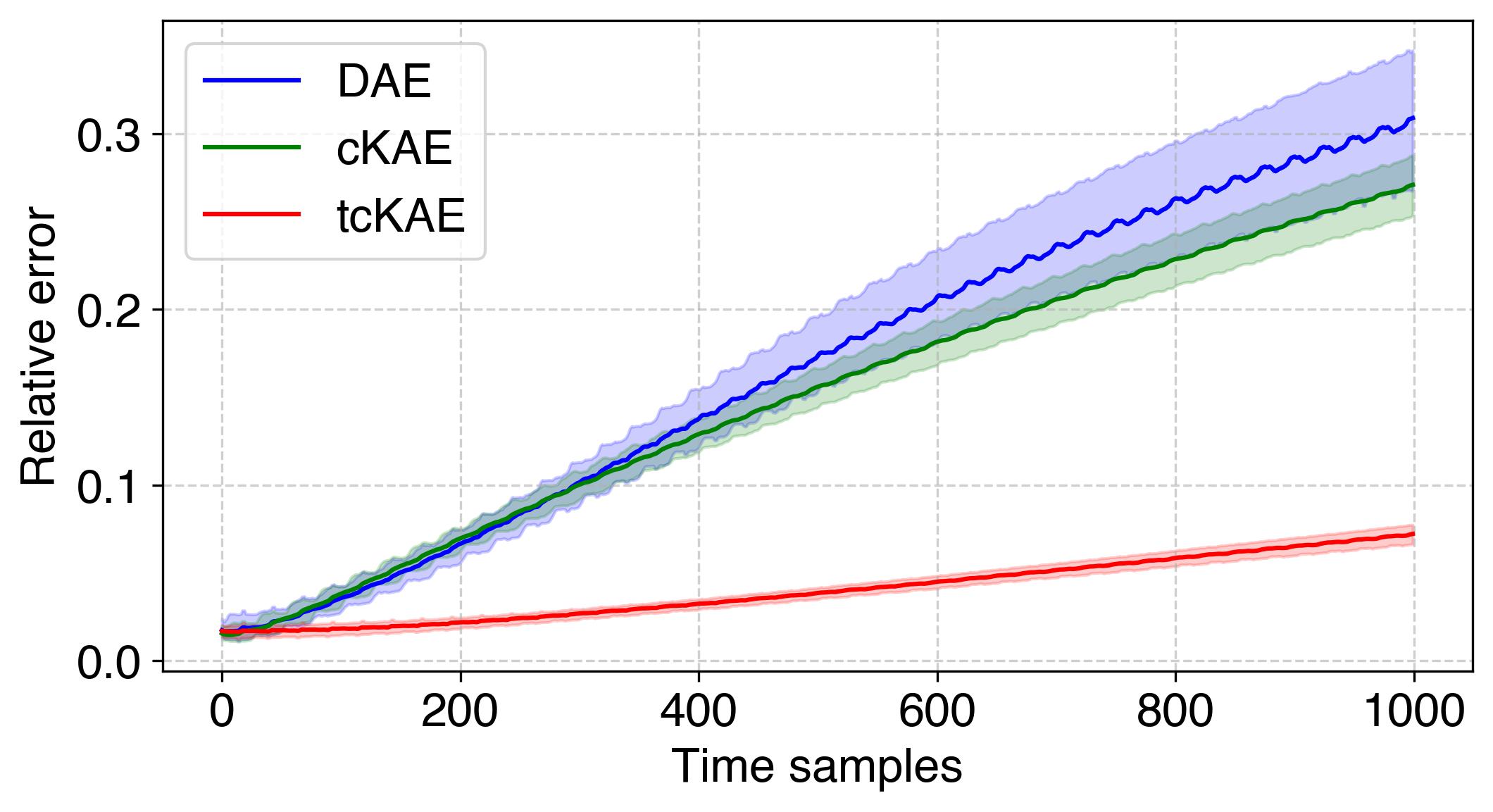}}
    \hfill
  \subfloat[ $N_\text{train}=28$, 30 dB SNR. \label{fig:err_wavy_ntr28snr30} ]{%
        \includegraphics[width=0.49\linewidth]{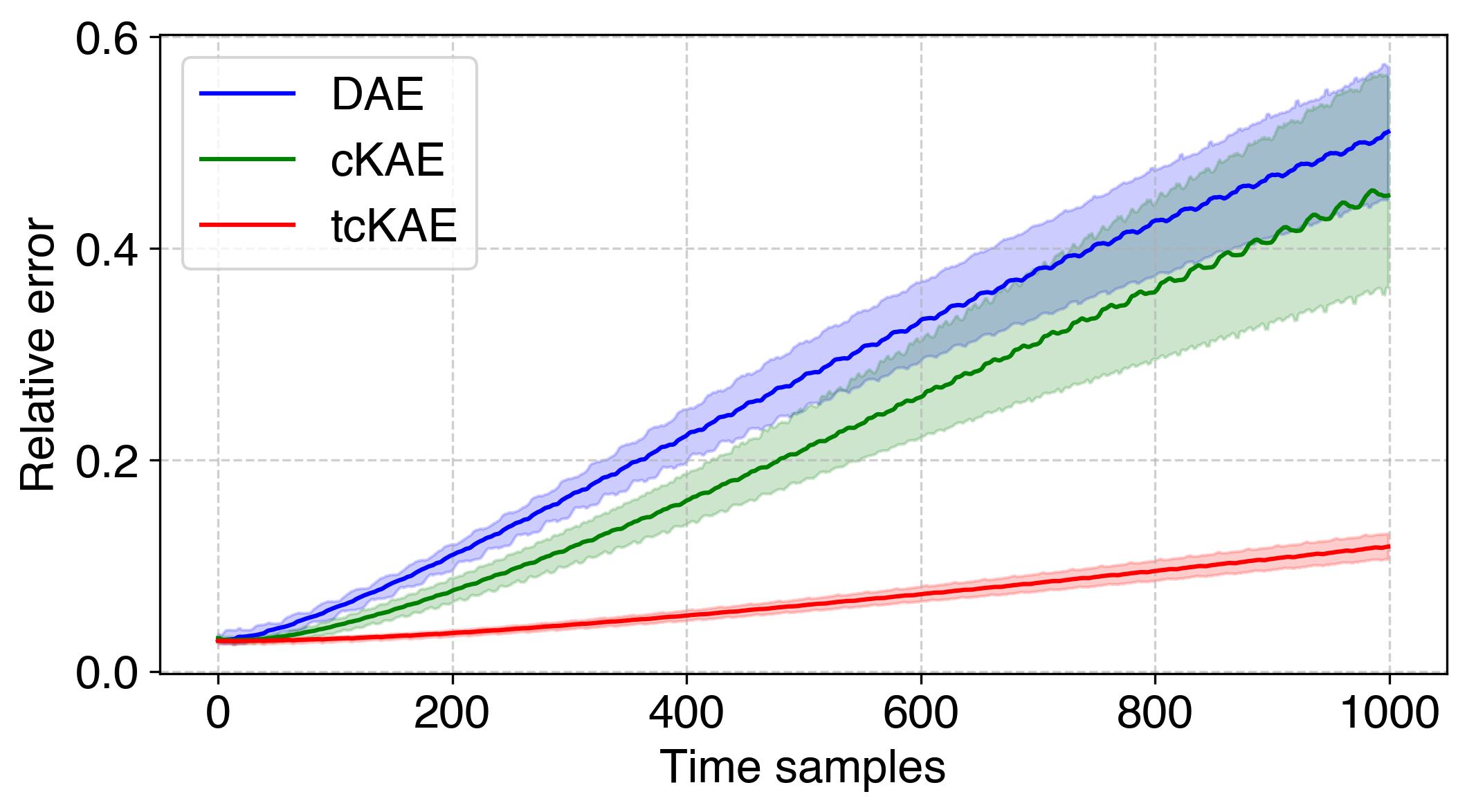}}
    \\ 
   \subfloat[$N_\text{train}=52$, Clean.\label{fig:err_pend_ntr52snr0} ]{%
        \includegraphics[width=0.49\linewidth]{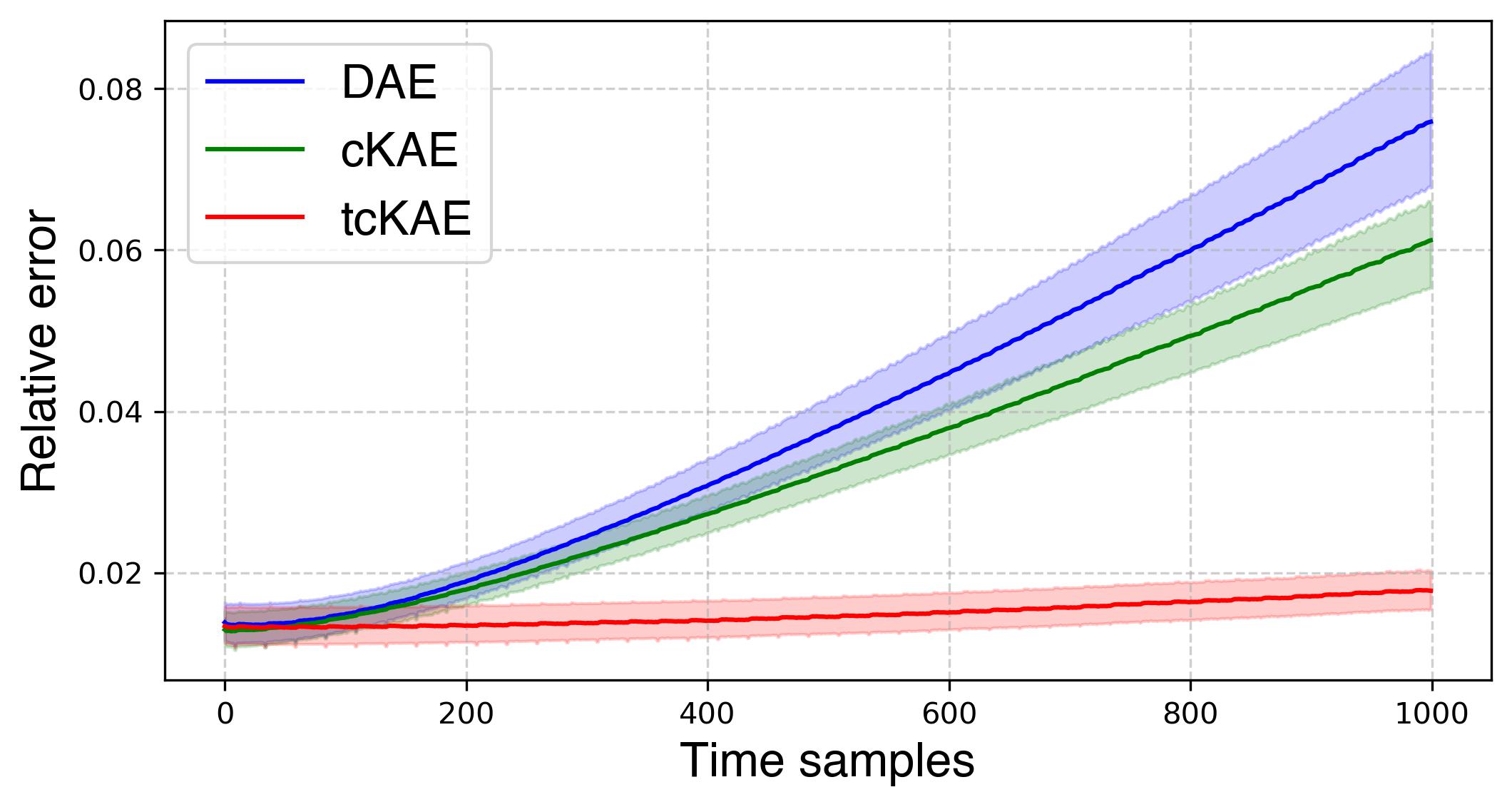}}
    \hfill 
    \subfloat[$N_\text{train}=52$, 30 dB SNR.\label{fig:err_wavy_ntr52snr30} ]{%
        \includegraphics[width=0.49\linewidth]{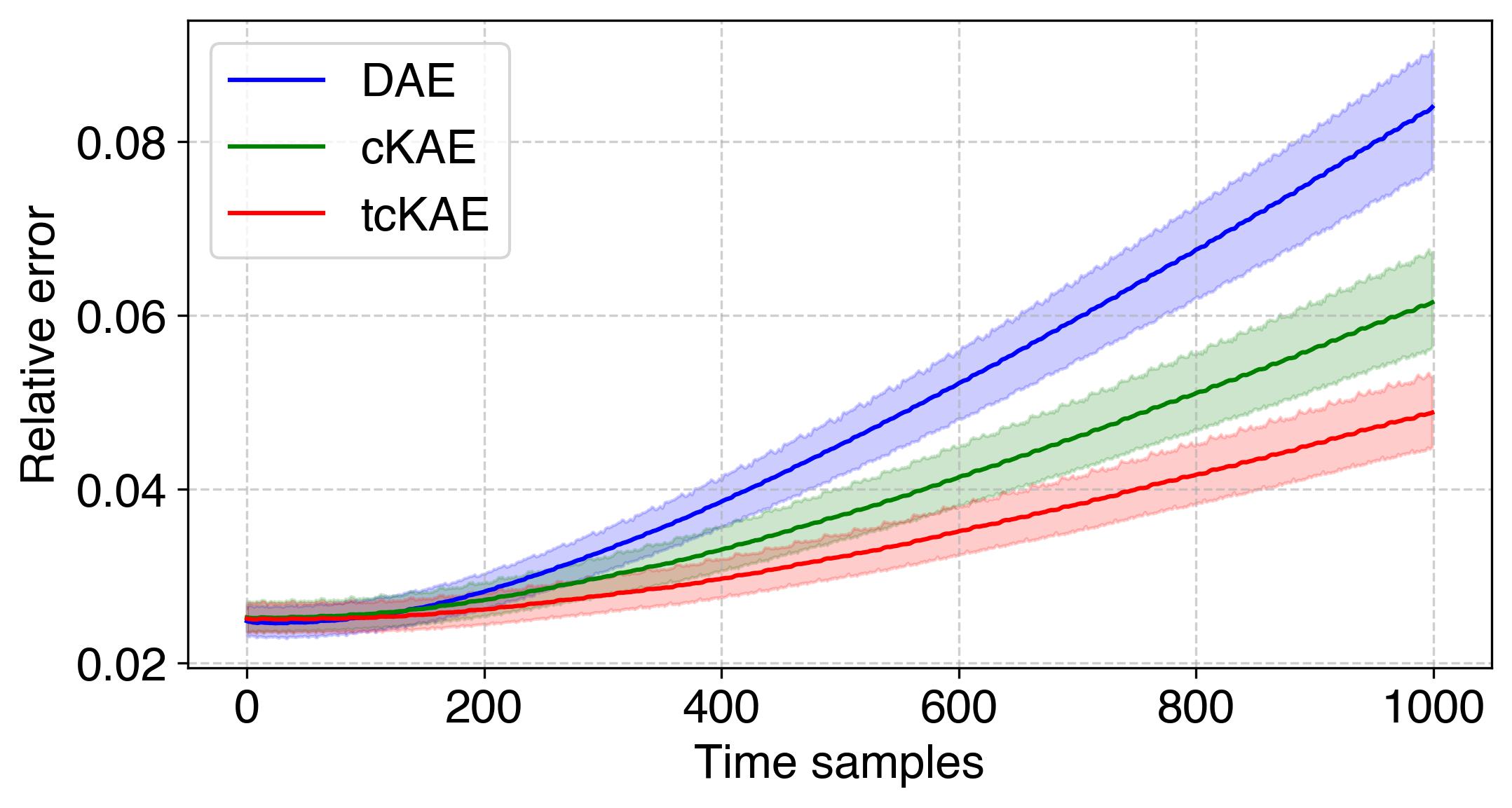}} 
  \caption{\small{Comparison between DAE, cKAE, and tcKAE for long-term extrapolation performance of oscillating electron beam. The shaded portion denotes the 90\% confidence interval.}  }\label{fig:err_electron_beam}
\end{figure*}

\begin{figure} [htbp]
    \centering
      \includegraphics[width= 1 \textwidth]{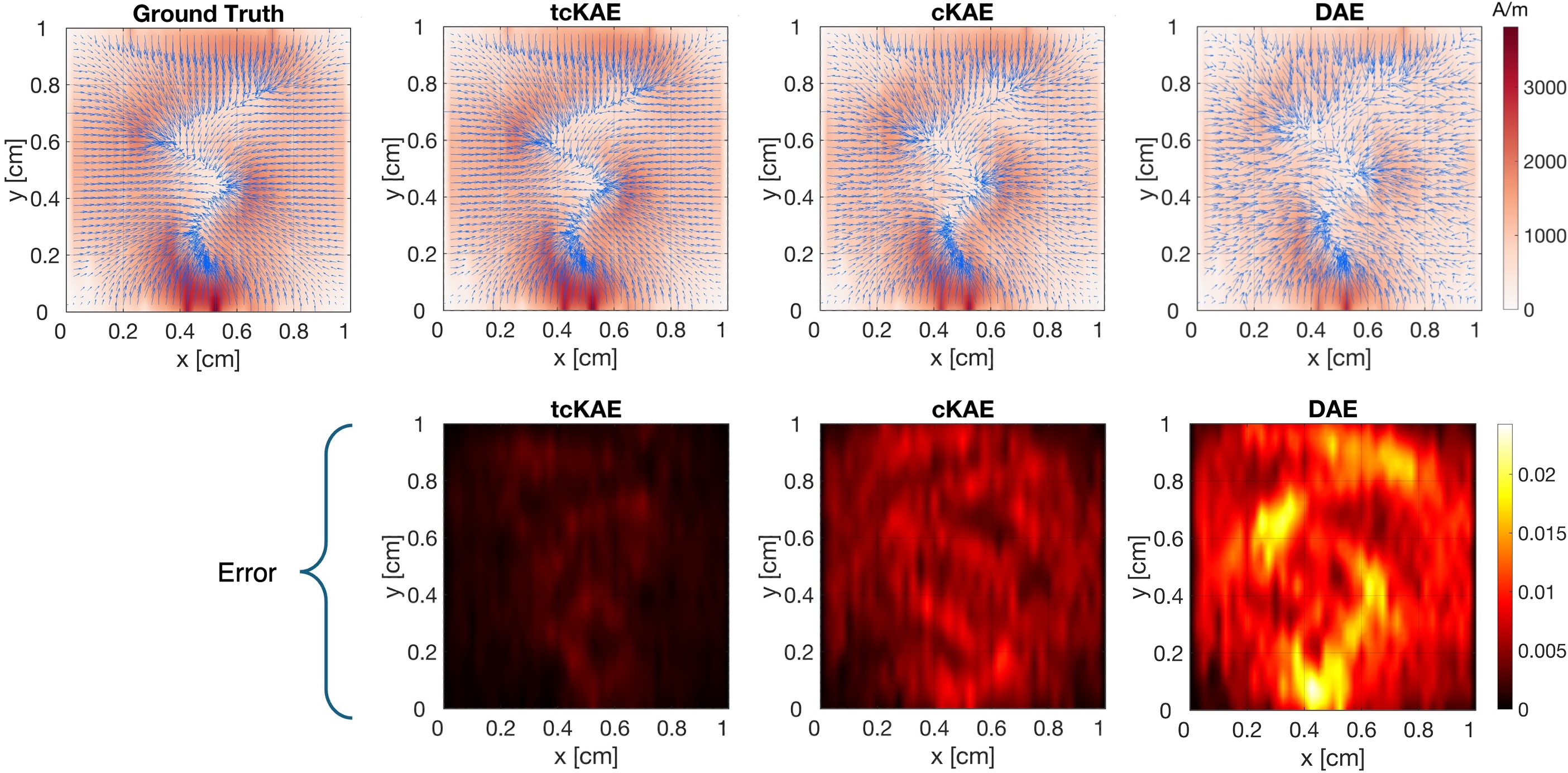}
  \caption{Predictions (top) and corresponding error profile (bottom) for oscillating electron beam example ($N_\text{train}=28$, SNR = 30 dB) at $1000^{\text{th}}$ time sample in the test-region.
   \label{fig:final_snap_error_beam}}
\end{figure}

\subsubsection*{Undamped Pendulum} The undamped pendulum is a classic textbook example of nonlinear dynamical system for benchmarking performance of data-driven predictive models \citep{azencot2020forecasting,greydanus2019hamiltonian,Chen2020Symplectic,bounou2021online,azari2022equivariant}. Although, the pendulum dynamics do not fall under typical PDE-governed high-dimensional systems, it serves as an excellent illustrative example due to the wide range of physical systems they represent, ranging from orbital mechanics, micorwave cavities to select biological and quantum systems. We use the elliptical functions \citep{azencot2020forecasting,cKAEcode} for exact solution of the motion governed by the ODE $\frac{d^2\theta}{dt^2}+\frac{g}{l}\text{sin}\theta = 0$, where $\theta\in[0,2\pi]$ denotes the angular displacement from equilibrium $\theta=0$ in radians. The gravitational constant and length of the pendulum is denoted by $g$ and $l$ respectively. We consider $g=9.8$ m/s$^2$, and $l=1$ m with initial condition $\theta=\theta_0$ and $\frac{d\theta}{dt}=\dot{\theta}=0$. We sample the solution at interval of $0.1$ s upto $220$ s, resulting in the dataset $\Theta\in \mathbb{R}^{2\times 2200}$ with $2200$ time samples. Note that the dimensionality of the state-space is two, since we have two states $\theta$ and $\dot{\theta}$. In order to mimic a high-dimensional system, we use a random orthogonal transformation \citep{azencot2020forecasting} to rotate it to higher dimensional $(N_{\text{dim}}=64)$ space, i.e. $\mathbf{X}=\mathbf{P}\cdot\Theta$, where $\mathbf{P}\in\mathbb{R}^{64\times 2}$ is a random orthogonal matrix. The final dataset $\mathbf{X}\in \mathbb{R}^{64\times 2200}$ consists of $2200$ time samples of a state with dimension $64$. We generate the data using the initial condition $\theta_0=2.4$. One cycle of oscillation is covered approximately by $31$ time steps (same as time samples). We perform the tests for $N_\text{train}=32$ and $64$, for clean and noisy ($30$ dB SNR) data. For $N_{\mathrm{train}}=32$, we take $N_{\text{val}_1}=N_{\text{val}_2}=8$, and for $N_{\mathrm{train}}=64$, $N_{\text{val}_1}=N_{\text{val}_2}=16$ is taken. We extrapolate 1000 future time samples using the trained model to evaluate the test error. We use $N_s=N_\text{train}$ initial conditions in test set to generate the 90\% confidence interval. The results are summarized in Table \ref{tab:pend_01}.

\begin{table}[htbp]
\footnotesize
\centering
\caption{Error (\%) comparison for different $N_\text{train}$ and noise levels for pendulum data. The quantities inside parenthesis show the width of 90\% confidence interval.}
\label{tab:pend_01}
\begin{tabular}{lcc|cc}
\toprule
& \multicolumn{2}{c|}{$N_\text{train}=32$} & \multicolumn{2}{c}{$N_\text{train}=64$} \\
\cmidrule(lr){2-3} \cmidrule(lr){4-5}
Method & Clean & 30 dB SNR & Clean & 30 dB SNR \\
\midrule
DAE & 34.411~(20.12) & 35.055~(18.83)  & 13.884~(12.10) & 12.421~(9.75) \\
cKAE & 26.415~(18.52) & 23.202~(15.71)  & 12.388~(11.82) & 11.316~(9.75) \\
tcKAE & 12.561~(10.20) & 15.147~(10.92)  & 7.491~(6.36) & 9.826~(7.91) \\
\bottomrule
\end{tabular}
\end{table}

As can be seen in Table \ref{tab:pend_01}, and Figure \ref{fig:err_pend}, our proposed tcKAE has clear advantage over DAE or cKAE for clean data. This advantage is magnified for limited dataset of $N_\text{train}=32$, which covers approximately one and half cycle of oscillation of the pendulum. The advantage tends to diminish with increasing size of training set. Large $N_\text{train}$ not only makes learning with larger look-ahead step $k_m$ possible, but also reduces overfitting, resulting in enhanced stability for long-term predictions for DAE and cKAE. Table \ref{tab:pend_01} also shows that the variation of error (values inside parenthesis) is lowest for tcKAE. We see a similar trend when noise is included. However, interestingly with addition of noise in some cases the relative error actually decreases. This should not be surprising since models tend to overfit with limited data, and adding noise is one of the solutions to avoid such overfitting. Overall, tcKAE provides more accurate estimate of the state for long-term prediction.

\subsubsection*{Oscillating electron beam} Plasma systems are excellent candidates for data-driven modeling due to their high-diemnsionality, and inherent nonlinearity arising from the complex wave-particle interaction. A two dimensional electron beam (Figure \ref{fig:beam_snap_n40000}) oscillating under the influence of an external transverse magnetic flux is simulated using charge-conserving electromagnetic particle-in-cell (EMPIC) algorithm~\citep{na2016local}. The solution domain ($1$ m $\times$ $1$ m) is discretized using unstructured triangular mesh with 844 nodes, 2447 edges, and 1604 elements (triangles). The electric field data $(\mathbf{e}\in\mb{R}^{2447})$ is sampled at every 0.32 ns. Starting from $80$ ns, total $1751$ time samples are collected with the complete dataset $\mf{X}\in\mb{R}^{2447\times 1751}$. More details are provided in Methods: Simulation Setup section .

For the electron beam oscillation, one period approximately comprises of 25 time samples. In order to assess the performance of the Koopman models for limited dataset, we train the models for $N_\text{train}=28$ and $52$, each for clean and noisy (30 dB SNR) data. For $N_\text{train}=28$; $N_{\text{val}_1}=N_{\text{val}_2}=7$, and for $N_\text{train}=52$; $N_{\text{val}_1}=N_{\text{val}_2}=13$. We extrapolate up to 1000 time samples for evaluating the test error. Growth of the relative prediction error over 1000 time samples is depicted in Figure~\ref{fig:err_electron_beam}. The snapshots of the electric field distribution and corresponding spatial error profiles for long-term prediction is shown in Figure~\ref{fig:final_snap_error_beam}. Note that the spatial error is calculated as following: $\delta(x,y)=\frac{|\mathbf{x}(x,y)-\hat{\mathbf{x}}(x,y)|}{||\mathbf{x}||_2}$. As can be seen from Figure~\ref{fig:err_electron_beam} and Table \ref{tab:osc}, the advantage of tcKAE is prominent for limited dataset i.e. $N_\text{train}=28$.


\begin{table}[htbp]
\footnotesize
\centering
\caption{Error (\%) comparison for different $N_\text{train}$ and noise levels for oscillating electron beam data. The quantities inside parenthesis show
the width of 90\% confidence interval.}
\label{tab:osc}
\begin{tabular}{lcc|cc}
\toprule
& \multicolumn{2}{c|}{$N_\text{train}=28$} & \multicolumn{2}{c}{$N_\text{train}=52$} \\
\cmidrule(lr){2-3} \cmidrule(lr){4-5}
Method & Clean & 30 dB SNR & Clean & 30 dB SNR \\
\midrule
DAE & 16.570~(4.22) & 27.006~(6.20)  & 3.956~(0.87) & 4.782~(0.74) \\
cKAE & 14.958~(2.15) & 21.901~(8.28)  & 3.370~(0.61) & 3.900~(0.64) \\
tcKAE & 4.003~(0.73) & 6.575~(1.23)  & 1.489~(0.46) & 3.376~(0.52) \\
\bottomrule
\end{tabular}
\end{table}

\begin{figure*} [t]

    \centering
  \subfloat[$N_\text{train}=28$, Clean. \label{fig:err_flow_ntr28snr0} ]{%
       \includegraphics[width=0.49\linewidth]{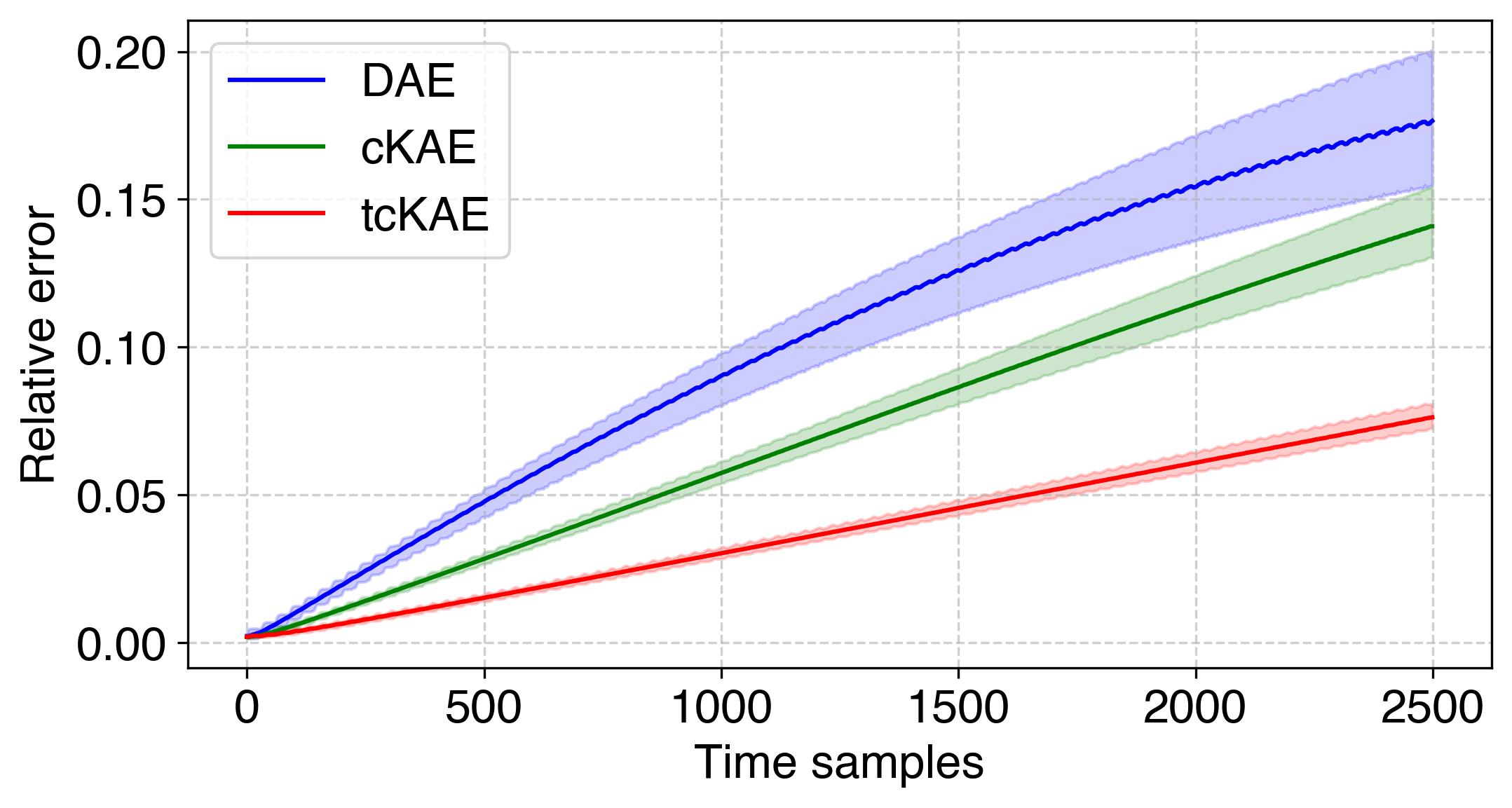}}
    \hfill
  \subfloat[ $N_\text{train}=28$, 30 dB SNR. \label{fig:err_flow_ntr28snr30} ]{%
        \includegraphics[width=0.49\linewidth]{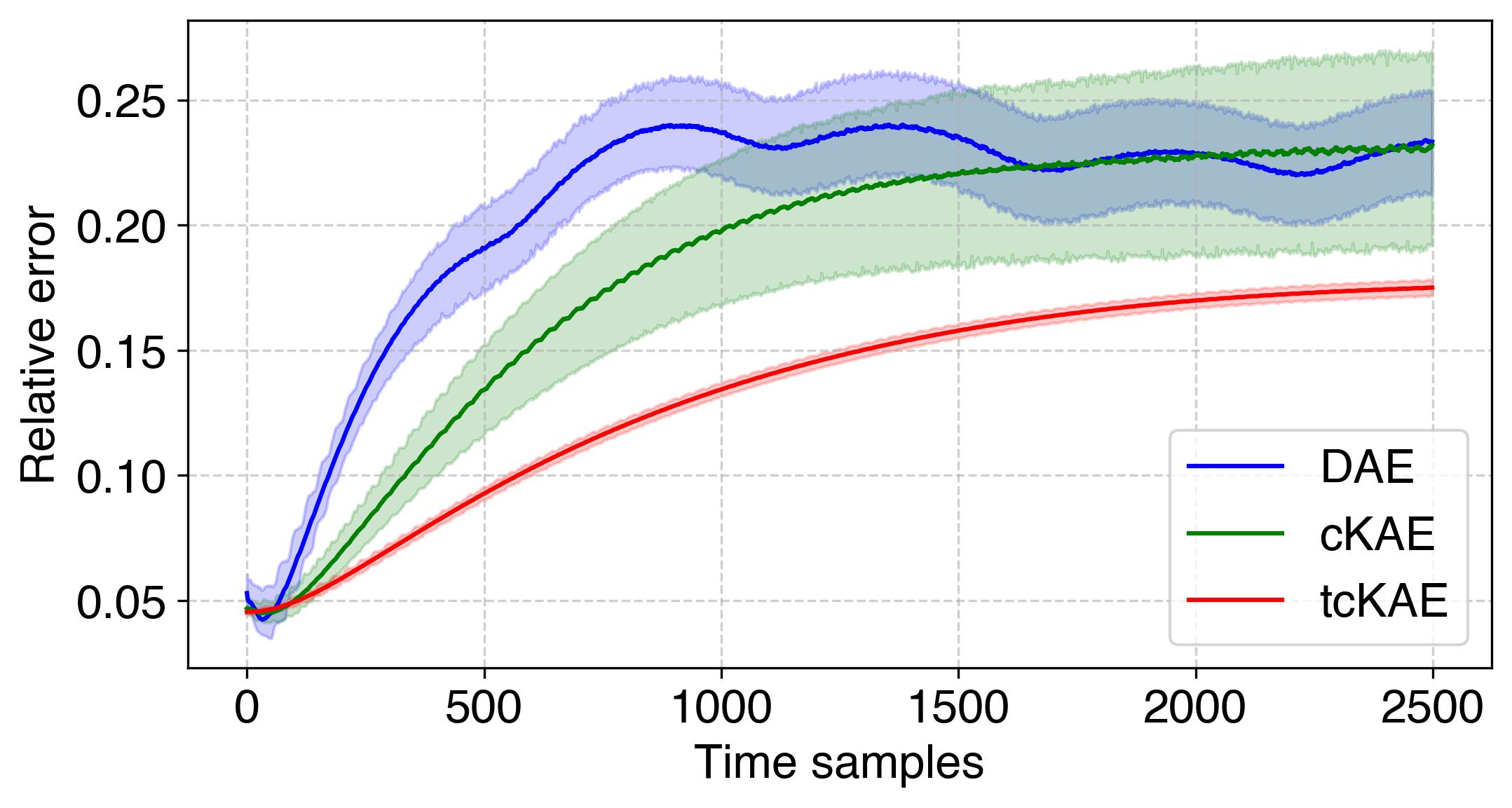}}
    \\ 
   \subfloat[$N_\text{train}=40$, Clean.\label{fig:err_pend_ntr40snr0} ]{%
        \includegraphics[width=0.49\linewidth]{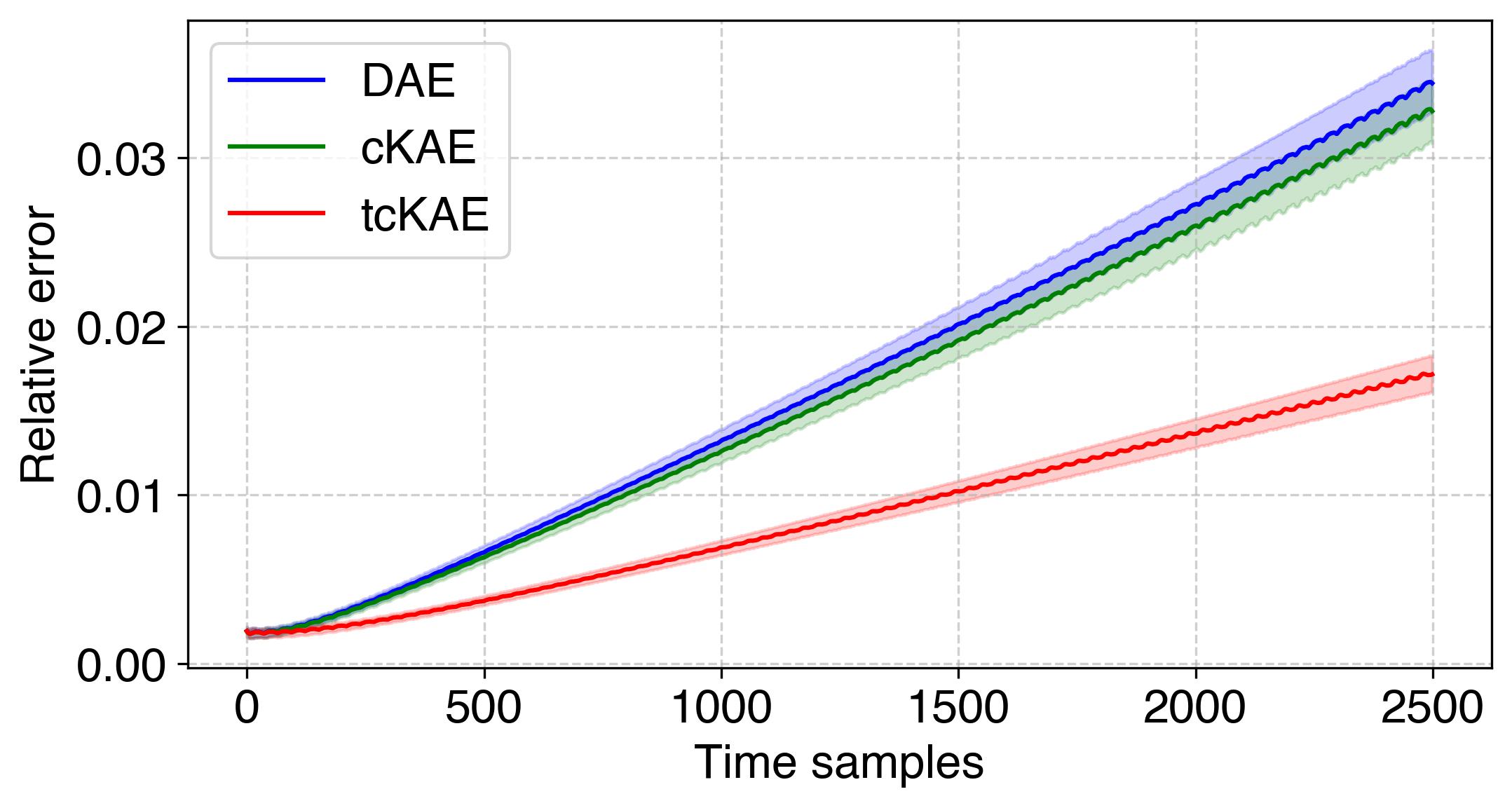}}
    \hfill 
    \subfloat[$N_\text{train}=40$, 30 dB SNR.\label{fig:err_flow_ntr40snr30} ]{%
        \includegraphics[width=0.49\linewidth]{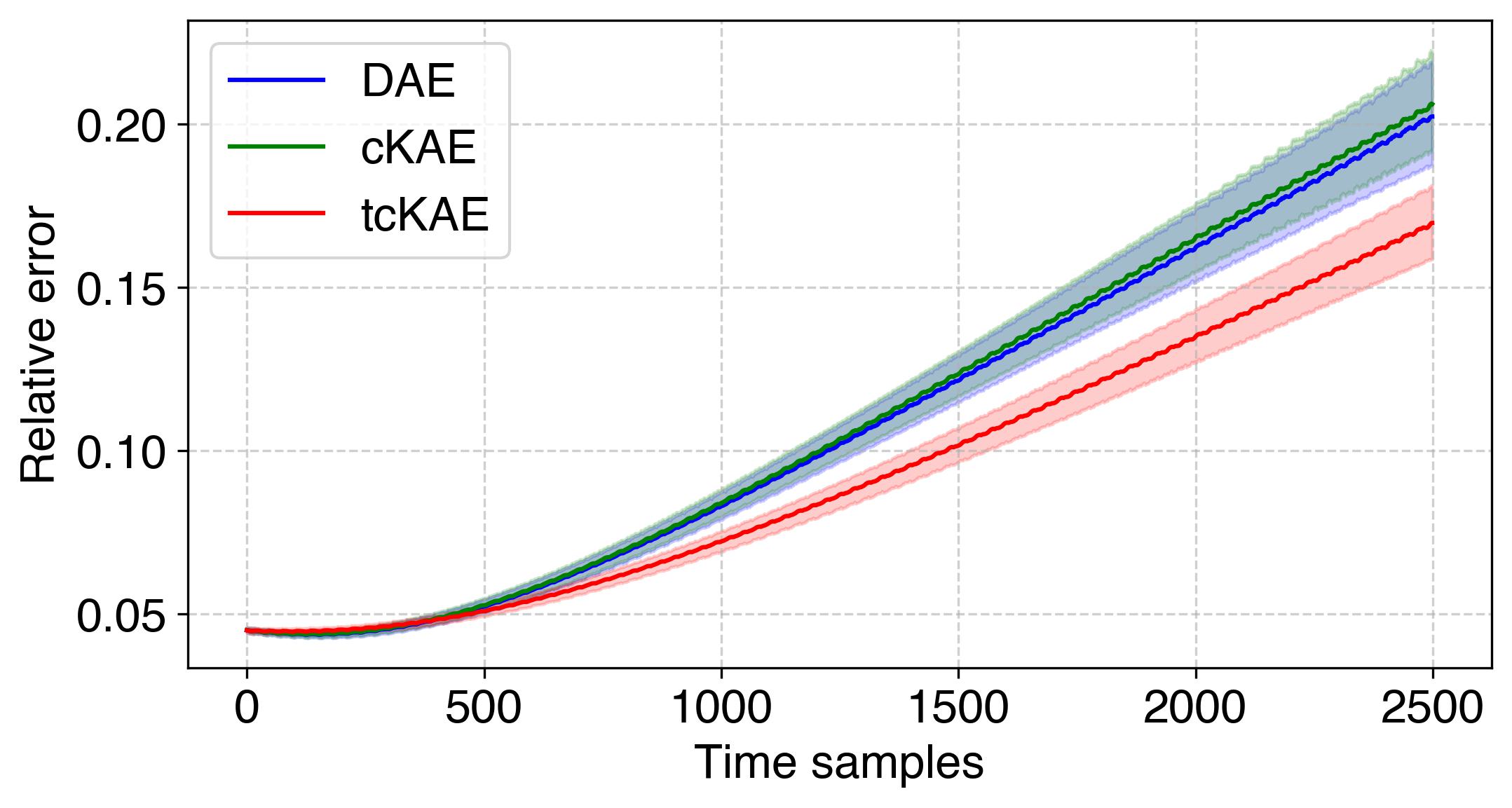}} 
  \caption{\small{Comparison between DAE, cKAE, and tcKAE for long-term extrapolation performance of flow past cylinder. The shaded portion denotes the 90\% confidence interval.}  }\label{fig:err_FPC}
\end{figure*}

\subsubsection*{Flow past cylinder}
Flow past cylinder is another common dynamical system used for benchmarking data-driven models. The simulation was carried out using MATLAB's FEAtool (Figure \ref{fig:flow_cyl_snap}). A cylinder with diameter 0.1 m is located at a height of 0.2 m. The fluid is characterized by its density $\rho=1$ Kg/m$^3$, and dynamic viscosity $\mu=0.001$ Kg/m s. The flow is unsteady with a maximum velocity of 1 m/s and mean velocity $\frac{2}{3}$ of the maximum velocity. The simulation is run for 80 s until the steady state is achieved, and the horizontal $u$ component of the velocity is probed at every 0.02 s starting from 20 s. The complete dataset $\mf{X}\in\mb{R}^{2647\times 3001}$ consists of 3001 time samples. More details regarding the simulation setup is presented in Methods: Simulation Setup section. Growth of relative error over 2500 time samples is shown in Figure~\ref{fig:err_FPC}. Predicted velocity field snapshots and corresponding error profile for long-term prediction is shown in Figure~\ref{fig:final_snap_error_FPC}. Here also we observe that tcKAE provides significant advantage for limited training data.

\begin{table}[htbp]
\footnotesize
\centering
\caption{Error (\%) comparison for different $N_\text{train}$ and noise levels for flow past cylinder data. The quantities inside parenthesis show
the width of 90\% confidence interval.}
\label{tab:flow}
\begin{tabular}{lcc|cc}
\toprule
& \multicolumn{2}{c|}{$N_\text{train}=28$} & \multicolumn{2}{c}{$N_\text{train}=40$} \\
\cmidrule(lr){2-3} \cmidrule(lr){4-5}
Method & Clean & 30 dB SNR & Clean & 30 dB SNR \\
\midrule
DAE & 10.182~(2.24) & 20.724~(3.60)  & 1.695~(0.18) & 10.775~(1.29) \\
cKAE & 7.153~(1.06) & 18.376~(5.58)  & 1.616~(0.17) & 10.933~(1.24) \\
tcKAE & 3.808~(0.45) & 13.311~(0.53)  & 0.874~(0.11) & 9.275~(0.94) \\
\bottomrule
\end{tabular}
\end{table}

In case of flow past cylinder, since one cycle approximately contains 29 time samples, we consider  $N_\text{train}=28$ and $40$. For $N_{\mathrm{train}}=28$, we take $N_{\text{val}_1}=N_{\text{val}_2}=7$, and for $N_{\mathrm{train}}=40$, $N_{\text{val}_1}=N_{\text{val}_2}=10$ is taken. We evaluate the test error for 2500 time samples in the extrapolation region. Growth of relative error over 2500 time samples is shown in Figure~\ref{fig:err_FPC}. Predicted velocity field snapshots and corresponding error profile for long-term prediction is shown in Figure~\ref{fig:final_snap_error_FPC}. Table~\ref{tab:flow} provides the relative (percentage) time-averaged test error. Here also we observe that tcKAE provides significant advantage for limited training data.





 


\begin{figure} [h]
    \centering
      \includegraphics[width= 0.75 \textwidth]{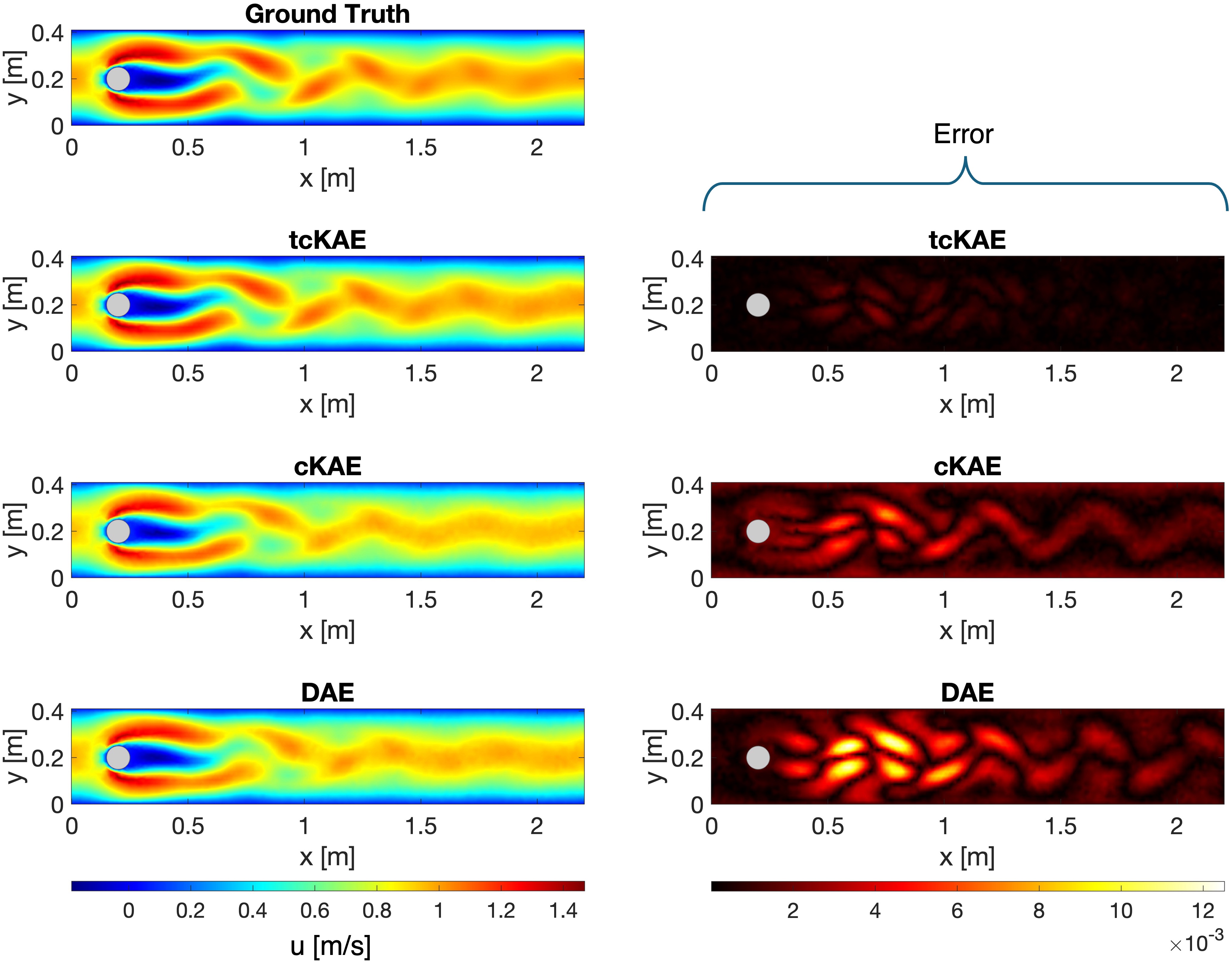}
  \caption{Predictions (left) and corresponding error profile (right) for flow past cylinder example ($N_\text{train}=28$, clean) at $2500^{\text{th}}$ time sample in the test-region. The grey circle represents the cylinder. 
   \label{fig:final_snap_error_FPC}}
\end{figure}

\section*{Discussion}

The proposed tcKAE algorithm advances the state-of-the-art by enforcing multi-step temporal consistency in the latent space. Unlike prior KAE architectures such as consistent Koopman autoencoders (cKAE) \citep{azencot2020forecasting}, which rely on backward dynamics assumptions, tcKAE sidesteps this limitation by introducing a consistency regularization term in the forward direction. The ablation study in the supplementary material further demonstrates that, even without the backward and consistency losses, the combination of forward, identity, and temporal consistency losses achieves comparable or even superior accuracy to cKAE. This innovation strengthens the Koopman invariance of the latent space, enhancing robustness to limited date and noise, improving generalizability. Furthermore, empirical results demonstrate that tcKAE outperforms both cKAE and dynamic autoencoders (DAEs) \citep{lusch2018deep} in terms of long-term prediction accuracy and stability, particularly for scenarios where we have limited training data. The proposed tcKAE not only achieved lower error margins in benchmark problems such as pendulum oscillation, oscillating electron beam, and flow past a cylinder, but also demonstrated narrower confidence intervals, thereby inspiring greater confidence in the predictions. It is important to note that the proposed tcKAE does not guarantee unconditional stability. The tcKAE predictions are \textit{more} stable for long-term predictions compared to DAE and cKAE. However, the secular growth in relative error indicates that if we were to predict 10 or 20 times further, the large error would likely render the results useless for all the models.

Despite its advantages, tcKAE has some limitations. With sufficient training data, the performance gap between tcKAE and vanilla KAEs narrows significantly, reducing the need for additional consistency constraints. Furthermore, the inclusion of temporal regularization introduces computational overhead, increasing training time compared to simpler DAE method. These trade-offs highlight the necessity of tcKAE primarily in scenarios with scarce or noisy data, where its robustness and enhanced prediction accuracy provide a critical edge. This makes tcKAE particularly attractive for modeling high-dimensional nonlinear systems, where computationally costly numerical solvers limit the amount of training data available. Note that although, the proposed idea of temporal consistency is explored in the context of KAEs, it can be easily extended to other models. Future work may also explore control applications with tcKAE and hybrid models that balance the computational cost of tcKAE with the efficiency of traditional approaches, potentially by dynamically adapting regularization strength based on data quality and training progress.


\section*{Data Availability}
The code for generating the data and results of this paper will be made available upon request. Any such request should be addressed to A.C. It is also partially available at this dynamically updated GitHub \href{https://github.com/SOARLabOSU/tcKAE}{repository}.

\bibliography{main}



\section*{Acknowledgements}

This work was funded in part by the Department of Energy Grant DE-SC0022982 through the NSF/DOE Partnership in Basic Plasma Science and Engineering and in part by the Ohio Supercomputer Center Grants PAS-0061 and PAS-2709.

\section*{Author contributions statement}

I.N. and D.G. conceived the algorithm.  I.N. developed the code. A.C. conducted the experiments. I.N. and A.C. analyzed the results. F.T., M.K., and D.G. advised I.N and A.C. and acquired the funding and resources. All authors reviewed the manuscript. 

\section*{Additional information}


\textbf{Competing interests}

\noindent
The authors declare no competing interests.





\end{document}


\flushbottom
\newtitle





\section*{Training details}
The crucial hyperparameters to tune are learning rate $(l_r)$, learning rate decay rate $(l_{rd})$, corresponding decay schedule, maximum look-ahead step $k_m$, the weigths of the individual component of the total loss, i.e. $\gamma_\text{id}, \gamma_\text{fwd}, \gamma_\text{bwd}, \gamma_\text{con}$ and \textcolor{red}{$\gamma_\text{tc}$}. Also, for each case we keep $\gamma_\text{id}=1$ constant. Among structural hyperparameters, $N_h$, $N_l$ are crucial. Note that as mentioned in the previous subsection, $N_h$ is varied by varying $\alpha$. We provide values of these hyperparameters for extreme scenarios for each of the test-case. Note that $N_\text{in}=N_\text{out}$ are not tunable, and depend on the dimension of the input data. The pendulum cases is trained for 600 epochs whereas electron beam and fluid flow cases are trained for 1000 epochs.

\begin{table}[htbp]
\centering
\caption{Training hyperparameters of DAE, cKAE, and tcKAE for pendulum}
\label{tab:pend}
\begin{tabular}{lccccccccccccc}
\toprule
 & Method & $N_\text{in}=N_\text{out}$ & $N_h$ & $N_l$ & $l_r$ & $l_{rd}$ & $k_m$ & $\gamma_\text{fwd}$ & $\gamma_\text{bwd}$ & $\gamma_\text{con}$ & $\gamma_{tc}$ & $k_{tm}$ & $e_s$  \\
\midrule
\multirow{3}{*}{\makecell{$N_\text{train}=32$,\\ clean}} & DAE & 64 & 64 & 12 & 1e-2  & 0.5 & 16 & 4 & - & - & - & - & - \\
 & cKAE & 64 & 96 & 16 & 1e-2  & 0.5 & 16 & 4 & 2 & 1e-4 & - & - & -  \\
 & tcKAE & 64 & 64 & 16 & 1e-2  & 0.5 & 16 & 4 & 2 & 1e-4 & 4 & 8 & 50  \\
\midrule
\multirow{3}{*}{\makecell{$N_\text{train}=32$,\\ 30 dB noise}} & DAE & 64 & 64 & 10 & 1e-2  & 0.5 & 16 & 6 & - & - & - & - & - \\
 & cKAE & 64 & 96 & 12 & 1e-2  & 0.5 & 16 & 4 & 6 & 1e-5 & - & - & -  \\
 & tcKAE & 64 & 96 & 16 & 1e-2  & 0.5 & 16 & 4 & 6 & 1e-5 & 1 & 20 & 50  \\
\midrule
\multirow{3}{*}{\makecell{$N_\text{train}=64$,\\  clean}} & DAE & 64 & 16 & 8 & 1e-2  & 0.5 & 16 & 1 & - & - & - & - & - \\
 & cKAE & 64 & 96 & 20 & 1e-2  & 0.5 & 16 & 0.1 & 0.1 & 1e-7 & - & - & -  \\
 & tcKAE & 64 & 16 & 8 & 1e-2  & 0.5 & 16 & 4 & 4 & 1e-4 & 1 & 10 & 20  \\
 \midrule
 \multirow{3}{*}{\makecell{$N_\text{train}=64$,\\ $30$ dB  noise}} & DAE & 64 & 64 & 16 & 1e-2  & 0.5 & 16 & 0.1 & - & - & - & - & - \\
 & cKAE & 64 & 96 & 20 & 1e-2  & 0.5 & 16 & 0.1 & 0.1 & 1e-5 & - & - & -  \\
 & tcKAE & 64 & 16 & 8 & 1e-2  & 0.5 & 16 & 2 & 1 & 1e-5 & 2 & 10 & 50  \\
\bottomrule
\end{tabular}
\end{table}

\subsection*{Undamped pendulum}
 Please see Table \ref{tab:pend}. We schedule the decay in learning rate at following epochs: $[30,100,200,400]$.


\subsection*{Oscillating electron beam}
Please see Table \ref{tab:osc_beam}. The learning rate decay schedule for oscillating electron beam is $[30,~200,~400,~700]$, i.e. learning rate is reduced by a factor of $0.5$ at epochs 30,70, 200 and 700.
\begin{table}[htbp]
\centering
\caption{Training hyperparameters of DAE, cKAE, and tcKAE for oscillating electron beam}
\label{tab:osc_beam}
\begin{tabular}{lccccccccccccc}
\toprule
 & Method & $N_\text{in}=N_\text{out}$ & $N_h$ & $N_l$ & $l_r$ & $l_{rd}$ & $k_m$ & $\gamma_\text{fwd}$ & $\gamma_\text{bwd}$ & $\gamma_\text{con}$ & $\gamma_{tc}$ & $k_{tm}$ & $e_s$  \\
\midrule
\multirow{3}{*}{\makecell{$N_\text{train}=28$,\\ clean}} & DAE & 2447 & 320 & 16 & 1e-3  & 0.5 & 20 & 0.5 & - & - & - & - & - \\
 & cKAE & 2447 & 320 & 16 & 1e-3  & 0.5 & 20 & 2 & 0.5 & 1e-7 & - & - & -  \\
 & tcKAE & 2447 & 192 & 36 & 1e-3  & 0.5 & 20 & 1 & 0.5 & 1e-7 & 1e-2 & 12 & 50  \\
\midrule
\multirow{3}{*}{\makecell{$N_\text{train}=28$,\\ 30 dB noise}} & DAE & 2447 & 320 & 16 & 1e-3  & 0.5 & 20 & 0.5 & - & - & - & - & - \\
 & cKAE & 2447 & 192 & 20 & 1e-3  & 0.5 & 20 & 0.5 & 2 & 1e-4 & - & - & -  \\
 & tcKAE & 2447 & 128 & 48 & 1e-3  & 0.5 & 20 & 0.5 & 2 & 1e-4 & 1e-2 & 10 & 100  \\
\midrule
\multirow{3}{*}{\makecell{$N_\text{train}=52$,\\  clean}} & DAE & 2447 & 192 & 36 & 1e-3  & 0.5 & 20 & 2 & - & - & - & - & - \\
 & cKAE & 2447 & 320 & 36 & 1e-3  & 0.5 & 20 & 2 & 1e-3 & 1e-5 & - & - & -  \\
 & tcKAE & 2447 & 192 & 36 & 1e-3  & 0.5 & 20 & 2 & 1e-3 & 1e-5 & 1e-2 & 24 & 100  \\
 \midrule
 \multirow{3}{*}{\makecell{$N_\text{train}=52$,\\ $30$ dB  noise}} & DAE & 2447 & 320 & 36 & 1e-3  & 0.5 & 20 & 2 & - & - & - & - & - \\
 & cKAE & 2447 & 256 & 24 & 1e-3  & 0.5 & 20 & 2 & 1e-2 & 1e-6 & - & - & -  \\
 & tcKAE & 2447 & 256 & 24 & 1e-3  & 0.5 & 20 & 2 & 1e-2 & 1e-6 & 1e-4 & 28 & 50  \\
\bottomrule
\end{tabular}
\end{table}

\subsection*{Flow past cylinder}
Please see Table \ref{tab:flow2}. The learning rate decay schedule (epochs) is $[30,~200,~400,~700]$.
\begin{table}[htbp]
\centering
\caption{Training hyperparameters of DAE, cKAE, and tcKAE for flow past cylinder}
\label{tab:flow2}
\begin{tabular}{lccccccccccccc}
\toprule
 & Method & $N_\text{in}=N_\text{out}$ & $N_h$ & $N_l$ & $l_r$ & $l_{rd}$ & $k_m$ & $\gamma_\text{fwd}$ & $\gamma_\text{bwd}$ & $\gamma_\text{con}$ & $\gamma_{tc}$ & $k_{tm}$ & $e_s$  \\
\midrule
\multirow{3}{*}{\makecell{$N_\text{train}=28$,\\ clean}} & DAE & 2647 & 192 & 96 & 1e-3  & 0.5 & 15 & 2 & - & - & - & - & - \\
 & cKAE & 2647 & 256 & 92 & 1e-3  & 0.5 & 15 & 2 & 2 & 1e-7 & - & - & -  \\
 & tcKAE & 2647 & 320 & 112 & 1e-3  & 0.5 & 15 & 2 & 4 & 1e-7 & 1e-1 & 10 & 100  \\
\midrule
\multirow{3}{*}{\makecell{$N_\text{train}=28$,\\ 30 dB noise}} & DAE & 2647 & 256 & 80 & 1e-3  & 0.5 & 15 & 1e-2 & - & - & - & - & - \\
 & cKAE & 2647 & 192 & 48 & 1e-3  & 0.5 & 15 & 1e-2 & 0.1 & 1e-5 & - & - & -  \\
 & tcKAE & 2647 & 256 & 80 & 1e-3  & 0.5 & 15 & 0.1 & 0.1 & 1e-5 & 0.1 & 10 & 100  \\
\midrule
\multirow{3}{*}{\makecell{$N_\text{train}=40$,\\  clean}} & DAE & 2647 & 320 & 54 & 1e-3  & 0.5 & 15 & 1 & - & - & - & - & - \\
 & cKAE & 2647 & 320 & 54 & 1e-3  & 0.5 & 15 & 1 & 1e-4 & 1e-6 & - & - & -  \\
 & tcKAE & 2647 & 320 & 48 & 1e-3  & 0.5 & 15 & 1 & 1e-6 & 1e-5 & 1e-3 & 30 & 50  \\
 \midrule
 \multirow{3}{*}{\makecell{$N_\text{train}=40$,\\ $30$ dB  noise}} & DAE & 2647 & 320 & 54 & 1e-3  & 0.2 & 20 & 4 & - & - & - & - & - \\
 & cKAE & 2647 & 320 & 54 & 1e-3  & 0.2 & 20 & 4 & 1e-4 & 1e-5 & - & - & -  \\
 & tcKAE & 2647 & 320 & 54 & 1e-3  & 0.2 & 20 & 4 & 1e-4 & 1e-6 & 1e-4 & 50 & 50  \\
\bottomrule
\end{tabular}
\end{table}

\section*{Ablation study}
Note that the weights $\gamma_\text{id}$ and $\gamma_\text{fwd}$ are absolute essential. The ablation study essentially contains one extra case: one where we do not consider the loss components pertaining to backward dynamics, by chosing $\gamma_\text{bwd}=\gamma_{con}=0$. Table~\ref{tab:abla_pend_01} presents the ablation test results for the pendulum, Table~\ref{tab:abl_plasma} presents it for the oscillating electron beam data, and Table~\ref{tab:abl_flow} contains the ablation test results for the flow past cylinder data. 

\section*{Effect of $k_{tm}$}
We vary $k_{tm}$ and keep the rest of the optimal tcKAE hyperparameters fixed to assess the effect of $k_{tm}$ on the prediction accuracy. Table~\ref{tab:steps_tc_pend} presents the results for the pendulum, Table~\ref{tab:steps_tc_osc} for the oscillating electron beam, and Table~\ref{tab:steps_tc_flow} for the flow past cylinder.
\begin{table}[htbp]
\centering
\caption{Ablation study for undamped pendulum keeping $\gamma_\text{bwd}=\gamma_{con}=0$.}
\label{tab:abla_pend_01}
\begin{tabular}{lcccccc}
\toprule
 & $\gamma_\text{id}$ & $\gamma_\text{fwd}$ & $\gamma_\text{bwd}$ & $\gamma_\text{con}$ & \textcolor{blue}{$\gamma_\text{tc}$} & Average relative error$(\%)$ \\
\midrule
\multirow{4}{*}{$N_\text{train}=32$, clean} & 1 & 4 & 0 & 0 & 0 & 34.411~(20.12) \\
 & 1 & 4 & 0 & 0 & 4 & 9.225~(6.37) \\
 & 1 & 4 & 2 & 1e-4 & 0 & 26.415~(18.52) \\
 & 1 & 4 & 2 & 1e-4 & 4 & 12.561~(10.20)  \\
\midrule
\multirow{4}{*}{$N_\text{train}=32$, 30 dB noise} & 1 & 6 & 0 & 0 & 0 & 35.055~(18.83) \\
 & 1 & 6 & 0 & 0 & 4 & 12.000~(8.70) \\
 & 1 & 4 & 6 & 1e-5 & 0 & 23.202~(15.71) \\
 & 1 & 4 & 6 & 1e-5 & 1 & 15.145~(10.92) \\
\midrule
\multirow{4}{*}{$N_\text{train}=64$, clean} & 1 & 1 & 0 & 0 & 0 & 13.884~(21.10) \\
 & 1 & 1 & 0 & 0 & 2 & 7.121~(5.28) \\
 & 1 & 0.1 & 0.1 & 1e-7 & 0 & 12.388~(11.82) \\
 & 1 & 4 & 4 & 1e-4 & 1 & 7.491~(6.36) \\
\midrule
\multirow{4}{*}{$N_\text{train}=64$, 30 dB noise} & 1 & 0.1 & 0 & 0 & 0 & 12.421~(9.75) \\
 & 1 & 2 & 0 & 0 & 2 & 11.592~(9.82) \\
 & 1 & 0.1 & 0.1 & 1e-5 & 0 & 11.316~(9.75) \\
 & 1 & 2 & 1 & 1e-5 & 2 & 9.826~(7.91) \\
\bottomrule
\end{tabular}
\end{table}

\begin{table}[htbp]
\centering
\caption{Ablation study for oscillating electron/plasma beam keeping $\gamma_\text{bwd}=\gamma_{con}=0$.}
\label{tab:abl_plasma}
\begin{tabular}{lcccccc}
\toprule
 & $\gamma_\text{id}$ & $\gamma_\text{fwd}$ & $\gamma_\text{bwd}$ & $\gamma_\text{con}$ & $\gamma_\text{tc}$ & Average relative error$(\%)$ \\
\midrule
\multirow{4}{*}{$N_\text{train}=28$, clean} & 1 & 0.5 & 0 & 0 & 0 & 16.570~(4.22) \\
 & 1 & 2 & 0 & 0 & 1e-2 & 3.608~(1.01) \\
 & 1 & 2 & 0.5 & 1e-7 & 0 & 14.958~(2.15) \\
 & 1 & 1 & 0.5 & 1e-7 & 1e-2 & 4.003~(0.73)  \\
\midrule
\multirow{4}{*}{$N_\text{train}=28$, 30 dB noise} & 1 & 0.5 & 0 & 0 & 0 & 27.006~(6.20) \\
 & 1 & 1 & 0 & 0 & 1e-2 & 8.892~(1.82) \\
 & 1 & 0.5 & 2 & 1e-4 & 0 & 21.901~(8.28) \\
 & 1 & 0.5 & 2 & 1e-4 & 1e-2 & 6.575~(1.32) \\
\midrule
\multirow{4}{*}{$N_\text{train}=52$, clean} & 1 & 2 & 0 & 0 & 0 & 3.956~(0.87) \\
 & 1 & 2 & 0 & 0 & 1e-2 & 1.928~(5.19) \\
 & 1 & 2 & 1e-3 & 1e-5 & 0 & 3.370~(0.61) \\
 & 1 & 2 & 1e-3 & 1e-5 & 1e-2 & 1.489~(0.46) \\
\midrule
\multirow{4}{*}{$N_\text{train}=52$, 30 dB noise} & 1 & 2 & 0 & 0 & 0 & 4.781~(0.74) \\
 & 1 & 2 & 0 & 0 & 1e-2 & 2.919~(4.88) \\
 & 1 & 2 & 1e-2 & 1e-6 & 0 & 3.900~(0.64) \\
 & 1 & 2 & 1e-2 & 1e-6 & 1e-4 & 3.376~(0.52) \\
\bottomrule
\end{tabular}
\end{table}

\begin{table}[htbp]
\centering
\caption{Ablation study for flow past cylinder keeping $\gamma_\text{bwd}=\gamma_{con}=0$.}
\label{tab:abl_flow}
\begin{tabular}{lcccccc}
\toprule
 & $\gamma_\text{id}$ & $\gamma_\text{fwd}$ & $\gamma_\text{bwd}$ & $\gamma_\text{con}$ & \textcolor{blue}{$\gamma_\text{tc}$} & Average relative error$(\%)$ \\
\midrule
\multirow{4}{*}{$N_\text{train}=28$, clean} & 1 & 2 & 0 & 0 & 0 & 10.182~(2.24) \\
 & 1 & 2 & 0 & 0 & 0.1 & 4.663~(0.60) \\
 & 1 & 2 & 2 & 1e-7 & 0 & 7.153~(1.06) \\
 & 1 & 2 & 4 & 1e-7 & 0.1 & 3.808~(0.45) \\
\midrule
\multirow{4}{*}{$N_\text{train}=28$, 30 dB noise} & 1 & 1e-2 & 0 & 0 & 0 & 20.724~(3.60) \\
 & 1 & 0.1 & 0 & 0 & 0.1 & 16.115~(2.23) \\
 & 1 & 1e-2 & 0.1 & 1e-5 & 0 & 18.376~(5.58) \\
 & 1 & 0.1 & 0.1 & 1e-5 & 1e-1 & 13.311~(0.53) \\
\midrule
\multirow{4}{*}{$N_\text{train}=40$, clean} & 1 & 1 & 0 & 0 & 0 & 1.695~(0.18) \\
 & 1 & 2 & 0 & 0 & 1e-3 & 0.535~(0.07) \\
 & 1 & 1 & 1e-4 & 1e-6 & 0 & 1.616~(0.17) \\
 & 1 & 1 & 1e-6 & 1e-5 & 1e-3 & 0.874~(0.11) \\
\midrule
\multirow{4}{*}{$N_\text{train}=40$, 30 dB noise} & 1 & 4 & 0 & 0 & 0 & 10.775~(1.29) \\
 & 1 & 2 & 0 & 0 & 1e-3 & 8.463~(0.96) \\
 & 1 & 4 & 1e-4 & 1e-5 & 0 & 10.933~(1.24) \\
 & 1 & 4 & 1e-4 & 1e-6 & 1e-3 & 9.275~(0.94) \\
\bottomrule
\end{tabular}
\end{table}

\begin{table}[htbp]
\centering
\caption{Error (\%) comparison for tcKAE with different $k_{tm}$ for pendulum data.}
\label{tab:steps_tc_pend}
\begin{tabular}{cc|cc|cc|cc}
\toprule
\multicolumn{4}{c|}{$N_\text{train}=32$} & \multicolumn{4}{c}{$N_\text{train}=64$} \\
\cmidrule(lr){1-4}  \cmidrule(lr){5-8}
$k_{tm}$ & Clean & $k_{tm}$ & 30 dB SNR & $k_{tm}$ & Clean & $k_{tm}$ & 30 dB SNR \\
\midrule
4 & 24.484~(19.85) & 16 & 29.205~(25.86)  & 6 & 7.639~(6.78) & 6 &  20.552~(17.73) \\
6 & 18.218~(15.16) & 18 & 25.970~(23.05)  & 8 & 8.218~(6.89) & 8 &  10.953~(8.75) \\
8 & 12.561~(10.20) & 20 & 15.147~(10.92)  & 10 & 7.491~(6.36) & 10 & 9.826~(7.91) \\
10 & 13.991~(11.17) & 22 & 30.916~(27.82)  & 12 & 10.716~(9.17) & 12 & 18.635~(15.69) \\
12 & 19.282~(15.45) & 24 & 28.682~(23.91)  & 14 & 13.428~(10.97) & 14 & 16.050~(13.06) \\
\bottomrule
\end{tabular}
\end{table}

\begin{table}[htbp]
\centering
\caption{Error (\%) comparison for tcKAE with different $k_{tm}$ for oscillating electron beam}
\label{tab:steps_tc_osc}
\begin{tabular}{cc|cc|cc|cc}
\toprule
\multicolumn{4}{c|}{$N_\text{train}=28$} & \multicolumn{4}{c}{$N_\text{train}=52$} \\
\cmidrule(lr){1-4} \cmidrule(lr){5-8}
$k_{tm}$ & Clean & $k_{tm}$ & 30 dB SNR & $k_{tm}$ & Clean & $k_{tm}$ & 30 dB SNR \\
\midrule
8 & 4.686~(0.87) & 6 & 7.573~(1.51)  & 20 & 1.577~(0.46) & 24 &  3.835~(0.63) \\
10 & 4.258~(0.75) & 8 & 7.004~(1.35)  & 22 & 1.539~(0.46) &26 &  3.855~(0.65) \\
12 & 4.003~(0.73) & 10 & 6.575~(1.23)  & 24 & 1.489~(0.46) & 28 & 3.376~(0.52) \\
14 & 4.057~(0.76) & 12 & 6.279~(1.19)  & 26 & 1.510~(0.48) & 30 & 3.399~(0.55) \\
16 & 3.888~(0.73) & 14 & 5.887~(1.09)  & 28 & 1.491~(0.48) & 32 & 3.919~(0.61) \\
\bottomrule
\end{tabular}
\end{table}

\begin{table}[htbp]
\centering
\caption{Error (\%) comparison for tcKAE with different $k_{tm}$ for flow past cylinder}
\label{tab:steps_tc_flow}
\begin{tabular}{cc|cc|cc|cc}
\toprule
\multicolumn{4}{c|}{$N_\text{train}=28$} & \multicolumn{4}{c}{$N_\text{train}=40$} \\
\cmidrule(lr){1-4} \cmidrule(lr){5-8}
$k_{tm}$ & Clean & $k_{tm}$ & 30 dB SNR & $k_{tm}$ & Clean & $k_{tm}$ & 30 dB SNR \\
\midrule
6 & 4.587~(0.46) & 6 & 14.763~(0.85)  & 26 & 1.108~(0.14) & 46 &  9.272~(0.98) \\
8 & 4.693~(0.44) & 8 & 14.494~(0.56)  & 28 & 1.000~(0.14) & 48 &  9.277~(0.98) \\
10 & 3.808~(0.45) & 10 & 13.311~(0.53)  & 30 & 0.874~(0.11) & 50 & 9.275~(0.94) \\
12 & 4.020~(0.36) & 12 & 13.220~(0.52)  & 32 & 0.943~(0.12) & 52 & 9.327~(0.97) \\
14 & 5.027~(0.49) & 14 & 12.933~(0.55)  & 34 & 0.857~(0.11) & 54 & 9.230~(0.96) \\
\bottomrule
\end{tabular}
\end{table}

\section*{Loss vs. Epochs}
For each system, we present two cases: one with clean data and smallest of $N_{\text{train}}$, and the other with noisy data and high $N_{\text{train}}$. Figure~\ref{fig:loss_pend} presents the plots for the pendulum, Figure~\ref{fig:loss_osc} presents it for the oscillating electron beam, and figure~\ref{fig:loss_flow} shows the plots for the flow past cylinder.

\begin{figure} [htbp]

    \centering
  \subfloat[$N_\text{train}=32$, Clean. \label{fig:loss_pend_32ntr_clean} ]{%
       \includegraphics[width=0.49\linewidth]{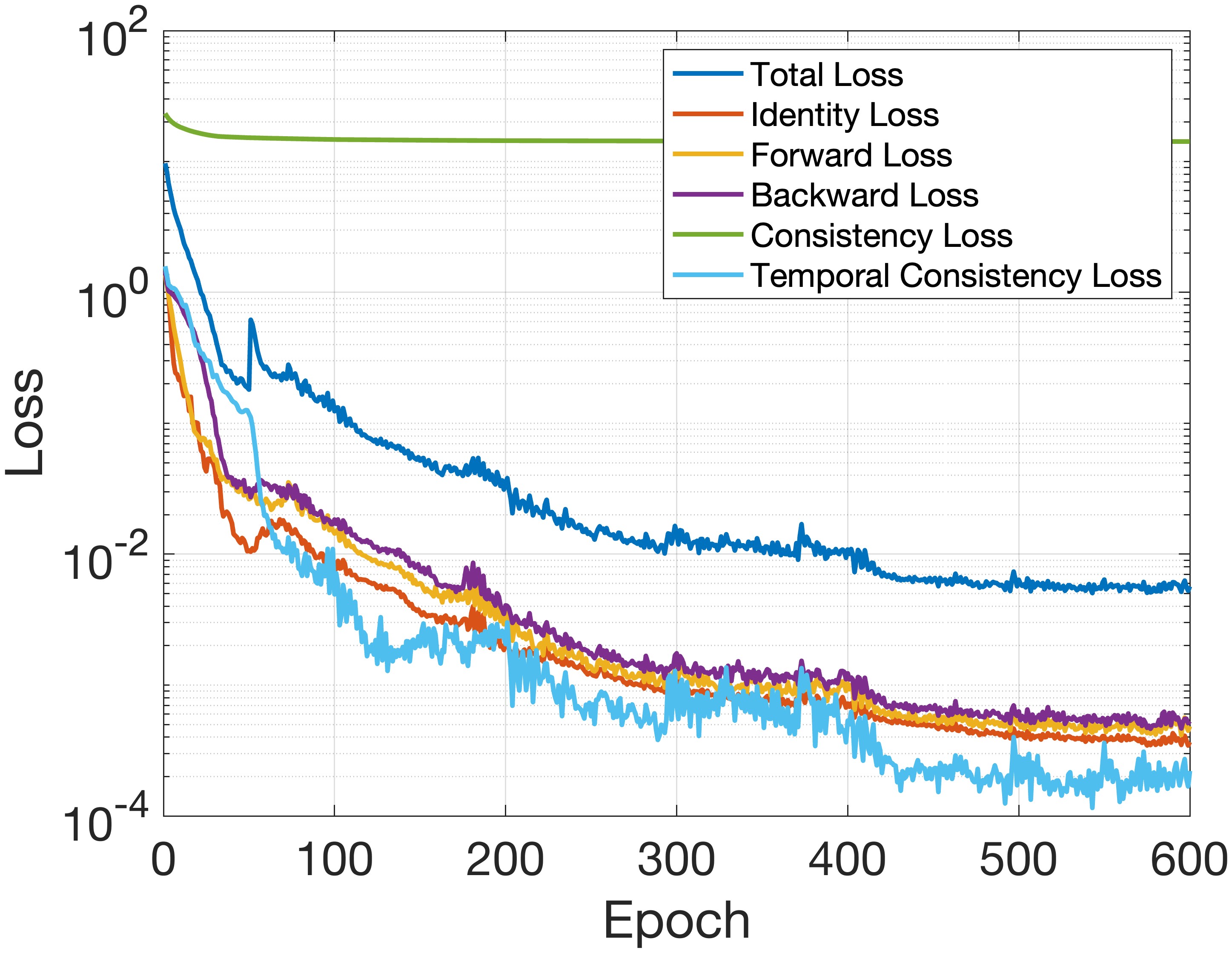}}
    \hfill
  \subfloat[ $N_\text{train}=64$, 30 dB SNR. \label{fig:loss_pend_ntr64_30SNR} ]{%
        \includegraphics[width=0.49\linewidth]{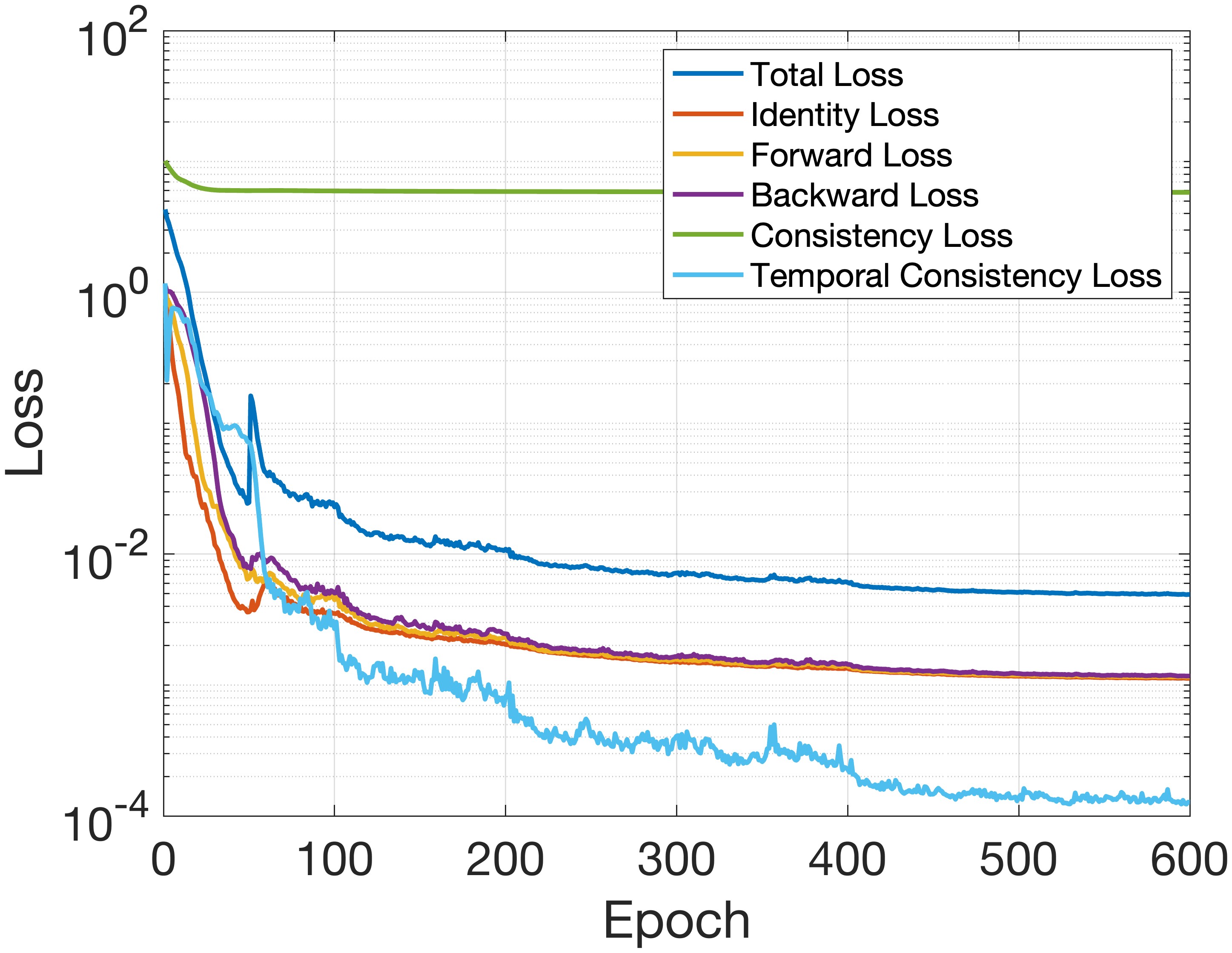}}

    \caption{\small{Loss vs. epochs plots for two pendulum data examples.} \label{fig:loss_pend} }
\end{figure}

\begin{figure} [htbp]

    \centering
  \subfloat[$N_\text{train}=28$, Clean. \label{fig:loss_wavy_ntr28+clean} ]{%
       \includegraphics[width=0.49\linewidth]{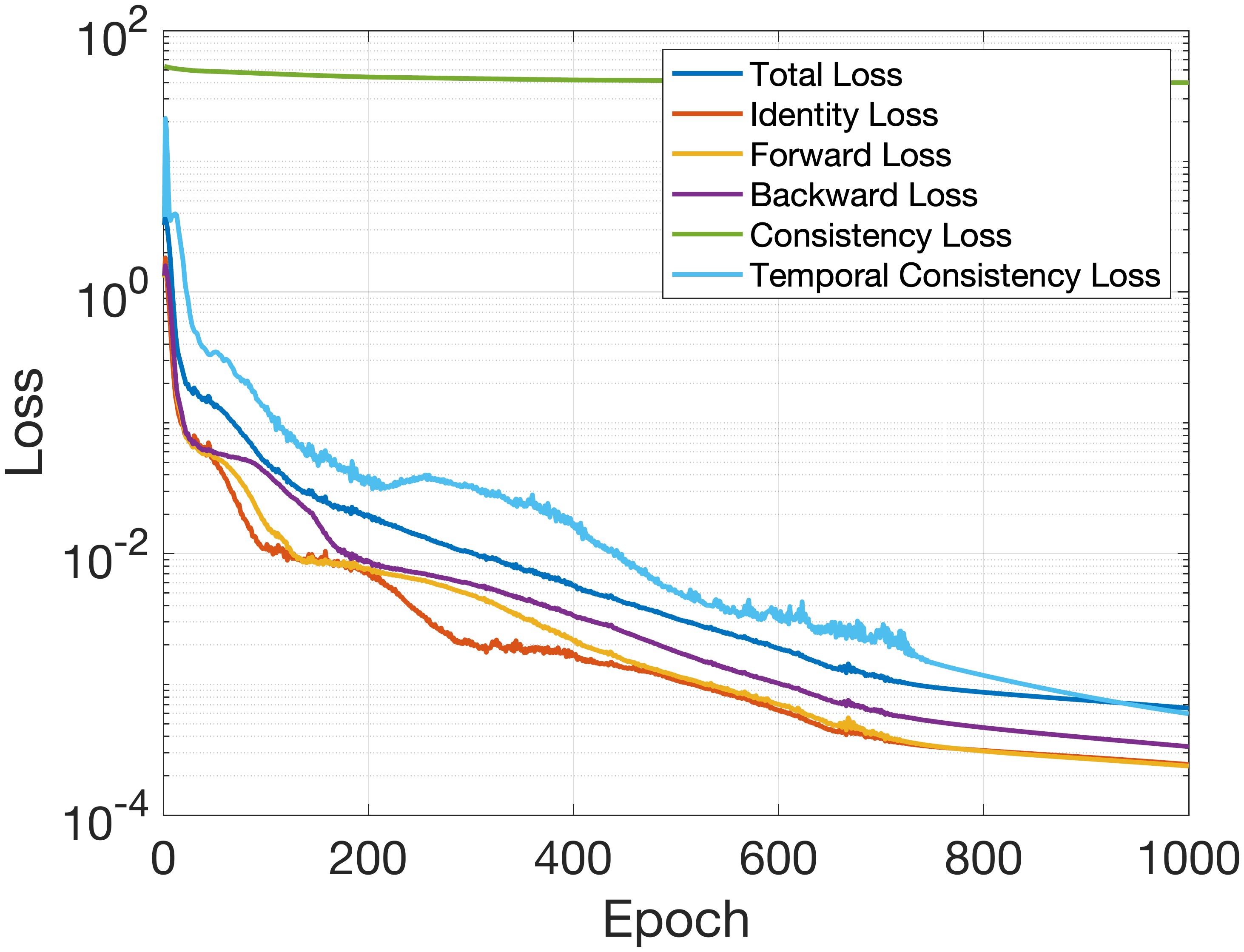}}
    \hfill
  \subfloat[ $N_\text{train}=52$, 30 dB SNR. \label{fig:loss-wavy_ntr52_30SNR} ]{%
        \includegraphics[width=0.49\linewidth]{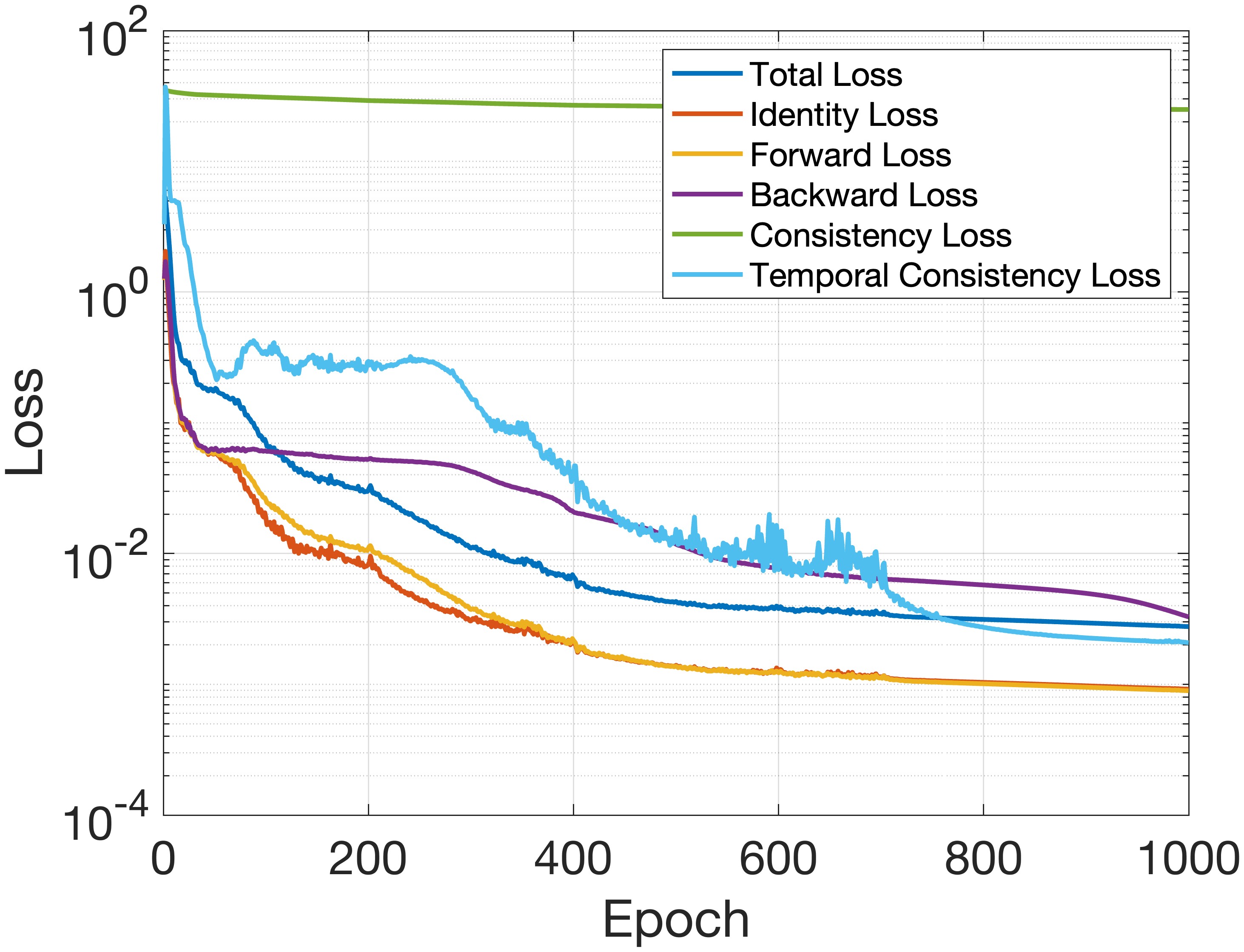}}

    \caption{\small{Loss vs. epochs plots for two oscillating electron beam data examples.}\label{fig:loss_osc}  }
\end{figure}

\begin{figure} [htbp]

    \centering
  \subfloat[$N_\text{train}=28$, Clean. \label{fig:loss_flow_ntr28_clean} ]{%
       \includegraphics[width=0.49\linewidth]{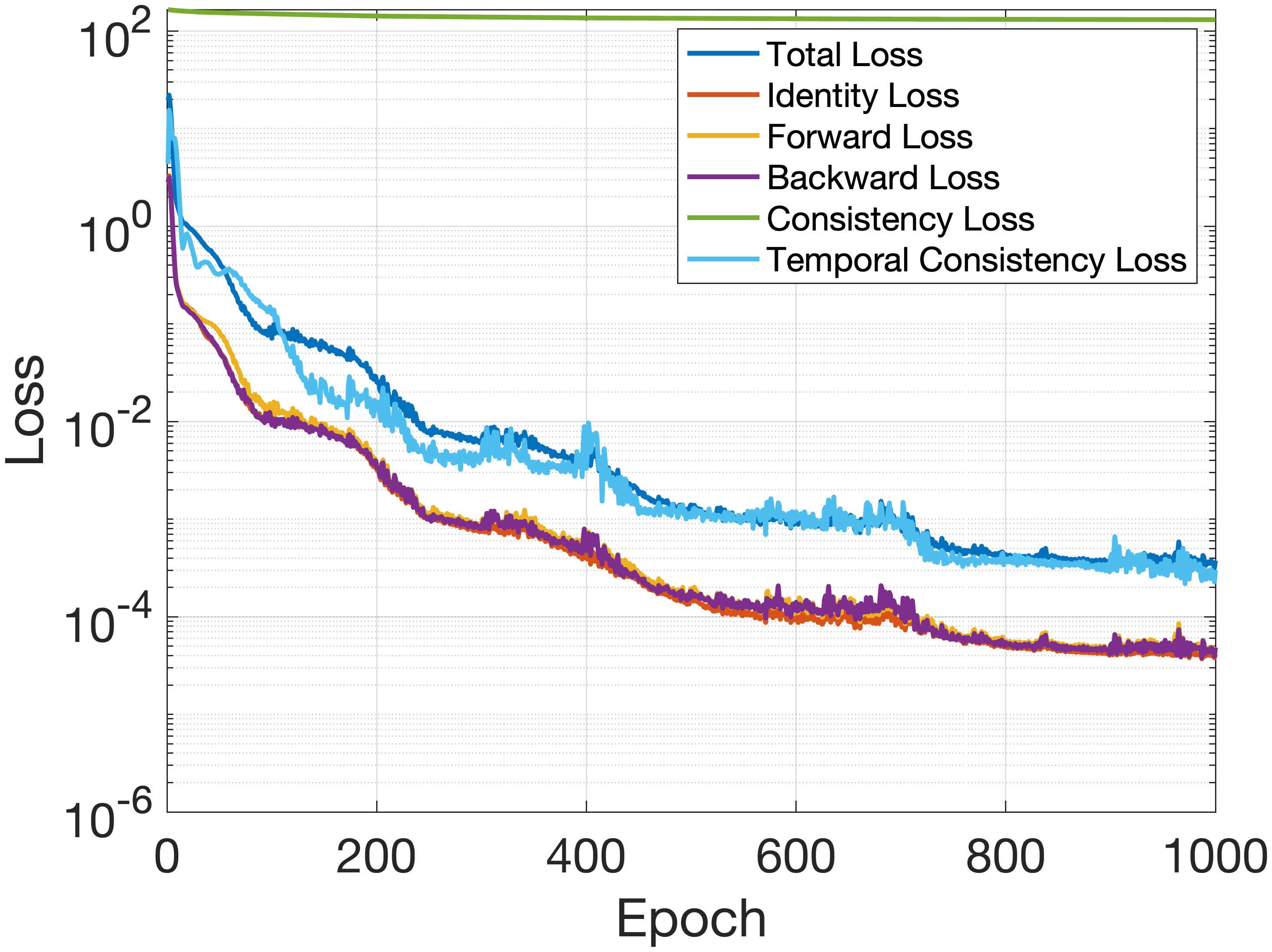}}
    \hfill
  \subfloat[ $N_\text{train}=40$, 30 dB SNR. \label{fig:loss_flow_ntr40_30SNR} ]{%
        \includegraphics[width=0.49\linewidth]{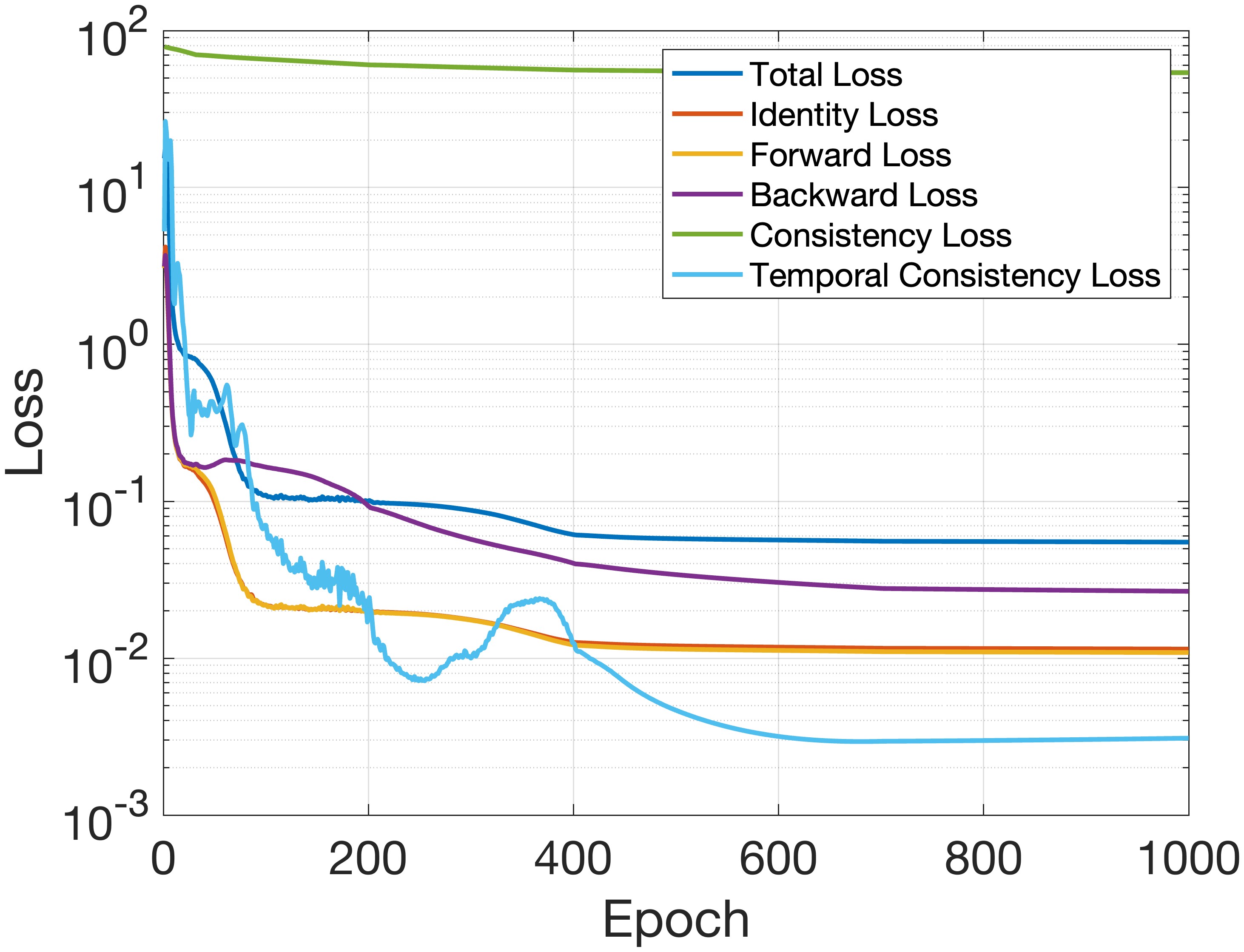}}

    \caption{\small{Loss vs. epochs plots for two flow past cylinder data examples.} \label{fig:loss_flow} }
\end{figure}

\section*{Training Time}
The training time for the DAE, cKAE and tcKAE for each system with clean and noisy data and different $N_{\text{train}}$ levels are presented here. Table~\ref{tab:runtime_pend} presents the training time data for the pendulum, Table~\ref{tab:runtime_osc} for the oscillating electron beam, and Table~\ref{tab:runtime_flow} for the flow past cylinder.

\begin{table}[htbp]
\centering
\caption{Training time (in minutes) for different $N_\text{train}$ and noise levels for pendulum data.}
\label{tab:runtime_pend}
\begin{tabular}{lcc|cc}
\toprule
& \multicolumn{2}{c|}{$N_\text{train}=32$} & \multicolumn{2}{c}{$N_\text{train}=64$} \\
\cmidrule(lr){2-3} \cmidrule(lr){4-5}
Method & Clean & 30 dB SNR & Clean & 30 dB SNR \\
\midrule
DAE & 2.540 & 2.527  & 5.399 & 6.022 \\
cKAE & 2.611 & 2.499  & 6.377 & 6.219 \\
tcKAE & 2.929 & 5.697  & 6.421 & 6.572 \\
\bottomrule
\end{tabular}
\end{table}

\newpage
\begin{table}[htbp]
\centering
\caption{Training time (in minutes) for different $N_\text{train}$ and noise levels for oscillating electron beam data.}
\label{tab:runtime_osc}
\begin{tabular}{lcc|cc}
\toprule
& \multicolumn{2}{c|}{$N_\text{train}=28$} & \multicolumn{2}{c}{$N_\text{train}=52$} \\
\cmidrule(lr){2-3} \cmidrule(lr){4-5}
Method & Clean & 30 dB SNR & Clean & 30 dB SNR \\
\midrule
DAE & 6.311 & 6.341  & 9.239 & 10.320 \\
cKAE & 6.362 & 6.323  & 10.737 & 9.111 \\
tcKAE & 8.366 & 8.032  & 16.779 & 19.351 \\
\bottomrule
\end{tabular}
\end{table}

\begin{table}[htbp]
\centering
\caption{Training time (in minutes) for different $N_\text{train}$ and noise levels for flow past cylinder data.}
\label{tab:runtime_flow}
\begin{tabular}{lcc|cc}
\toprule
& \multicolumn{2}{c|}{$N_\text{train}=28$} & \multicolumn{2}{c}{$N_\text{train}=52$} \\
\cmidrule(lr){2-3} \cmidrule(lr){4-5}
Method & Clean & 30 dB SNR & Clean & 30 dB SNR \\
\midrule
DAE & 8.329 & 7.806  & 8.231 & 8.805 \\
cKAE & 8.864 & 6.019 & 8.703 & 9.005 \\
tcKAE & 10.948 & 8.826 & 19.059 & 40.845 \\
\bottomrule
\end{tabular}
\end{table}
